\def\ouralgo{{ProtoBandit}}
\newcolumntype{C}{>{\centering\arraybackslash}m{0.5\textwidth}}
\newcommand\defeq{\stackrel{\mathclap{\normalfont\mbox{\tiny{def}}}}{=}}
\DeclareMathOperator*{\argmax}{arg\,max}
\DeclareMathOperator*{\argmin}{arg\,min}
\DeclarePairedDelimiter{\ceil}{\lceil}{\rceil}
\DeclareMathOperator{\E}{\mathop{\mathbb{E}}}
\def\S{\mathcal{S}}
\def\T{\mathcal{T}}
\newcommand{\revision}[1]{{\color{red} #1}}
\let\Ginclude@graphics\@org@Ginclude@graphics 
\title[ProtoBandit: Efficient Prototype Selection via Multi-Armed Bandits]{ProtoBandit: Efficient Prototype Selection\\ via Multi-Armed Bandits}
  \author{\Name{Arghya {Roy Chaudhuri}} \Email{arghyar@microsoft.com}\\
  \Name{Pratik Jawanpuria} \Email{pratik.jawanpuria@microsoft.com}\\
  \Name{Bamdev Mishra} \Email{bamdevm@microsoft.com}\\
 \addr Microsoft India}
\newif\iflongversion
\newcommand{\conditional}[2]{\iflongversion{#1}\else{#2}\fi}
\begin{document}

\maketitle

\begin{abstract}
In this work, we propose a multi-armed bandit-based framework for identifying a compact set of informative data instances (i.e., the prototypes) from a source dataset $\S$ that best represents a given target set $\T$. Prototypical examples of a given dataset offer interpretable insights into the underlying data distribution and assist in example-based reasoning, thereby influencing every sphere of human decision-making. 
Current state-of-the-art prototype selection approaches require $O(|\S||\T|)$ similarity comparisons between source and target data points, which becomes prohibitively expensive for large-scale settings. 
We propose to mitigate this limitation by employing stochastic greedy search in the space of prototypical examples and multi-armed bandits for reducing the number of similarity comparisons. Our randomized algorithm, {\ouralgo}, identifies a set of $k$ prototypes incurring $O(\revision{k^3}|\S|)$ similarity comparisons, which is independent of the size of the target set. 
An interesting outcome of our analysis is for the $k$-medoids clustering problem ($\T = \S$ setting) in which we show that our algorithm {\ouralgo} approximates the \texttt{BUILD} step solution of the partitioning around medoids (PAM) method in $O(\revision{k^3}|\S|)$ complexity. 
Empirically, we observe that {\ouralgo} reduces the number of similarity computation calls by several orders of magnitudes ($100-1000$ times) while obtaining solutions similar in quality to those from state-of-the-art approaches.

\end{abstract}

\revision{Note: The published version of this paper \citep{roychaudhuri22a} contains an erratum in Theorem \ref{thm:disq_coplexity}, i.e., {\ouralgo} scales $O(\revision{k^3}|\S|)$ instead of $O(\revision{k}|\S|)$. This is because the $\nu_0$ of {\ouralgo} at every iteration and the overall $\nu$ (across all the iterations) related as $\nu_0 = \frac{\nu}{k(1 - 1/ e - \epsilon)}$ and not $\nu_0 = \frac{\nu}{(1 - 1/ e - \epsilon)}$ (as mentioned in the published version). We had earlier missed the factor $k$ in the denominator. Consequently, an extra factor of $k^2$ comes in the numerator in Theorem \ref{thm:disq_coplexity}. To this end, we have highlighted in red all the changes in this paper.}

\section{Introduction}~\label{sec:intro}
Prototypical examples are data instances that together summarizes a given dataset or an underlying data distribution~\citep{weiser1982programmers,sproto,bien11a,koh17,yeh18a}. Such compact representation of a dataset is especially useful in this age of big-data, where the size of datasets goes beyond the capability of manual checking. Hence, prototypical samples help domain experts and data scientists by providing meaningful insights in complex domains~\citep{proto}. They also provide example-based reasoning, thereby improving the interpretability of data distributions~\citep{kim14a,Kim16,bib:Gurumoorthy+JM:2021}. 


A popular use-case of prototype selection is to select a test group that best represent the control group (or vice-versa). 
For product design~\citep{bib:Leahy:2013}, a designer needs to know the required features for a new product directly from the targeted users. Therefore, it is important to select a small subset of users that best represent the larger customer base. In the healthcare domain, prototype selection has been employed for building training datasets~\citep{bib:Suyal+S:2021}. 
Another interesting application of prototype selection is in explainable AI~\citep{bib:Linardatos+PK:2021}, where we need to analyze the cause behind the output of an AI model. 

Works such as~\citep{sproto2, sproto} have proposed finding prototypical elements in supervised setting where both features and label information of the data points are available. 
However, recent works~\citep{proto,bib:Gurumoorthy+JM:2021,Kim16} have focused on the more general unsupervised setting where only feature information is available. In this paper, we focus on unsupervised prototype selection. A key challenge here is the big-data setting, where one may require to perform similarity computations between huge number of data points pairs~\citep{bib:Zukhba:2010}. This makes the existing algorithms~\citep{Kim16,proto,bib:Gurumoorthy+JM:2021} is impractical.
Below, we concretize the prototype selection problem setup and discuss our contributions.



\paragraph{Prototype selection problem setup.}
Given two non-empty sets of points (source) $\mathcal{S}$ and (target) $\mathcal{T}$, a dissimilarity measure $d: \mathcal{T} \times \mathcal{S} \mapsto [0, 1]$, and a positive integer $k \leq |\mathcal{S}|$, our aim is to find a set $\mathcal{M} \subseteq \mathcal{S}$ that best represents the target set $\mathcal{T}$ such that $|\mathcal{M}|\leq k$. The set $\mathcal{M}$ is the set of prototypical elements. We call the tuple $(\mathcal{S}, \mathcal{T}, \mathbf{q}, d, k)$ a instance of the prototype selection problem, where $\mathbf{q} \in [0, 1]^{|\mathcal{T}|}$ is the weight vector associated with $T$. {Such a weight vector can be useful in applications where not every target sample is equally important and can also be used to capture the underlying distribution of the target set (if available). For the sake of convenience and interpretability, we assume $\sum_{b \in \mathcal{T}} q_b = 1$.} 

\paragraph{$k$-medoids clustering as a prototype selection problem.} Recently, \cite{bib:Gurumoorthy+JM:2021} have shown that optimal transport based prototype selection problem reduces to the $k$-medoids clustering problem when the target set is same as the source set ($\T=\S$). Unlike the more popular $k$-means clustering, the $k$-medoids clustering problem requires the cluster centers to be actual data points in the given dataset $\S$. Hence, $k$-medoids clustering is especially useful when interpretable cluster centers are desired. For instance, $k$-means centers of a group of images can visually be a random noise image~\citep{leskovec2020a,bib:Tiwari+ZMTPS:2020}. Another benefit of the k-medoids set up is that it only relies distances/similarities between data points. However, it should be noted that $k$-medoids clustering is a NP-hard problem in general~\cite{kmedoidsalgo}. Popular $k$-medoids algorithms include the partitioning around medoids (PAM) method \citep{bib:Kaufman+R:1990,rousseeuw09a} and approaches inspired by PAM such as FASTPAM1~\citep{kmedoidsalgo}, CLARA~\citep{bib:Kaufman+R:1990}, and CLARANS~\citep{ng02a}. Among them, FASTPAM1 guarantees the same solution as PAM and has a computational complexity of $O(|\S|^2)$. CLARA and CLARANS, on the other hand, output PAM-like solution. A recent randomized approach, BanditPAM~\citep{bib:Tiwari+ZMTPS:2020}, guarantees the same results as PAM with a (expected) computational complexity of $O(k|\S|\log(|\S|))$.


\paragraph{Our Contributions.}
We first generalize the \texttt{BUILD} step of the PAM  method~\citep{bib:Kaufman+R:1990} to allow for the scenarios when $\mathcal{S}\neq\mathcal{T}$. 
We next prove that the \texttt{BUILD} step of the proposed generalized PAM algorithm is equivalent to the existing optimal transport based SPOTgreedy~\citep{bib:Gurumoorthy+JM:2021} algorithm for prototype selection. This is interesting because the PAM algorithm can now also be understood from the lens of the optimal transport theory. Since the \texttt{BUILD} step of the proposed generalized PAM method is based on a greedy search procedure, we use it as a stepping stone to propose a sampling-based algorithm {\ouralgo} for the prototype selection problem. Specifically, we introduce the following novelties in the {\ouralgo} algorithm. 
\begin{enumerate}
    \item We employ random subset selection for more efficient greedy search as it reduces the search space (to find the next potential prototype) in the source set $\mathcal{S}$. 
    \item At each iteration, existing methods~\citep{Kim16,proto,bib:Gurumoorthy+JM:2021} require similarity computations for every element of the selected subset $\mathcal{M}$ with every element in the target set $\mathcal{T}$. Thus, if the target set $\mathcal{T}$ is large, or potentially infinite, then such approaches become impractical. We circumvent this issue by employing the multi-armed bandits (MAB) based sampling technique to estimate the similarity of a source point with the target set $\mathcal{T}$. 
\end{enumerate}
Our key technical result is that the similarity computations required by the proposed {\ouralgo} is independent of the target set size $|\mathcal{T}|$. Overall, {\ouralgo} requires $O(\revision{k^3}|S|)$ computations for obtaining $k$ prototypes from the source set $\S$ that represent the target set $\T$. 
We also provide an approximation guarantee of the prototypical set obtained by the proposed {\ouralgo} algorithm. In particular, let $f: 2^\mathcal{S} \mapsto [0, 1]$ denote the  similarity function between an input (candidate) set $\mathcal{M}$ and the target set $\mathcal{T}$ and $\mathcal{M}^*$ be the optimal solution. Then, we prove that  $f(\mathcal{M}_{ALG}) \geq \left(1 - e^{-1} - \epsilon \right)f(\mathcal{M}^*) - \nu$ with probability at least $1 -\delta$, where $\mathcal{M}_{ALG} \subset \mathcal{S}$ is output prototype set of our proposed {\ouralgo}, $\epsilon, \nu \in (0, 1)$ are the approximation parameters, and $\delta \in (0, 0.05)$ is the error threshold. 
A corollary to our main result is that {\ouralgo} approximates the \texttt{BUILD} step solution of the PAM method for the $k$-medoids clustering problem (i.e, the case when $\T=\S$) in $O(\revision{k^3}|\S|)$ complexity.

\section{Related work}\label{subsec:prev_work}
\paragraph{Prototype selection.}
{The problem of prototype selection has been mostly explored  \citep{sproto,crammer02a,wohlhart13a,wei15a} in the supervised learning setups where the label of the data points are available. 
Recent prototype selection approaches such as MMD-Critic~\citep{Kim16}, ProtoDash~\citep{proto}, and SPOTgreedy~\citep{bib:Gurumoorthy+JM:2021}, aim at obtaining the set of prototypical elements $\mathcal{M}\subset\mathcal{S}$ such that $\mathcal{M}$'s underlying distribution is close to that of the target $\mathcal{T}$. Such approaches are suitable in unsupervised settings as they assume availability of only the similarity (or distance) between pairs $(a,b)$, where $a\in\mathcal{S}$ and $b\in\mathcal{T}$. 
MMD-Critic and ProtoDash employ the MMD distance to capture the similarity between two distributions while SPOTgreedy uses the Wasserstein distance, i.e., the optimal transport framework. We describe the SPOTgreedy algorithm in detail in Appendix \conditional{\ref{subsec:spot_restate}}{B of our extended version \citep{chaudhuri22a}}. 
The above three approaches are based on greedy search and a common bottleneck in them is that, at every iteration of greedy search, they require $\mathcal{O}(|\mathcal{T}|)$ similarity computations, which becomes impractical as the target set $\mathcal{T}$ becomes large. To this end, we propose a multi-armed bandits based approach to alleviate this concern. 
}
\paragraph{PAM algorithm.} Prototype selection when the source and target sets are identical may specifically be viewed as identifying important data points in the given set, i.e., data summarisation. An intuitive way of unsupervised data summarisation is to apply a centroid-based clustering technique and choose the cluster centers as the representative of the whole dataset. A popular approach to accomplish this is $k$-medoids clustering using the partition around medoids (PAM) method~\citep{bib:Kaufman+R:1990}. PAM applies an exhaustive greedy search over the whole set through two main steps: \texttt{BUILD} and \texttt{SWAP}. During the \texttt{BUILD} step, PAM selects the medoids, and during the \texttt{SWAP} step it improves upon the already chosen medoids by replacing them with the new ones. 



\section{{\ouralgo}: An Efficient Prototype Selection Algorithm}~\label{subsec:gpam}\label{sec:rel_pam_spot}











In this section, we first generalize the PAM algorithm~\citep{bib:Kaufman+R:1990} to the prototype selection problem (i.e., the sets $\mathcal{S}$ and $\mathcal{T}$ are different). 
We then build on the generalized PAM algorithm and propose {\ouralgo}, which makes use of approximate greedy search and multi-armed bandit frameworks. 

\subsection{Generalized PAM}
We propose two modifications in the existing $k$-medoids clustering algorithm, PAM, to generalize it to the prototype selection setting. We detail the \texttt{BUILD} step of the generalized PAM algorithm in Algorithm~\ref{alg:genpam_build}. The full generalized PAM algorithm is in \conditional{Appendix~\ref{app:pam}}{Appendix~A of the extended version of this paper \citep{chaudhuri22a}}.


The first modification enables the algorithm to choose medoids from a set that is different from the set of points to be clustered. Therefore, unlike the original PAM algorithm, Algorithm~\ref{alg:genpam_build} can choose set of medoids from $\mathcal{S}$ to cluster the points in $\mathcal{T}$, even if $\mathcal{S} \neq \mathcal{T}$. The second modification lies in choosing the number of elements $r$ while updating the set of chosen points $\mathcal{M}$ during the \texttt{BUILD} step, which allows for additional flexibility and efficiency.
In our experiments, we consider $r=1$ setting unless specified otherwise. In this setting the medoids from the source set $\mathcal{S}$ are selected in a (strict) sequential manner, which has both theoretical and qualitative benefits.


\begin{algorithm}
\caption{\texttt{BUILD} step for the generalized PAM algorithm}\label{alg:genpam_build}
\DontPrintSemicolon
\SetKwInOut{Input}{Input}
\SetKwInOut{Output}{Output}
\Input{Problem instance given by $(\mathcal{S}, \mathcal{T}, \mathbf{q}, d, k)$, and a positive integers $r \in \{1, \cdots, k\}$.}
\Output{A set $\mathcal{M}$, such that $|\mathcal{M}| = k$}
\textbf{Initialization:} For each $j \in \mathcal{T}$, set $D_j = \infty$.\;

In this step we apply greedy strategy to choose an initial set of $k$ prototypes.\; 
$\mathcal{M} = \emptyset$\;

\While{$|\mathcal{M}| < k$}{
    Define gain vector $g$ with entries
    \begin{equation}\label{eq:gain_genpam_build}
        g_i = \sum_{j \in \mathcal{T}} q_j \max \{D_j - d(j, i), 0\},\;  \forall i \in \mathcal{S} \setminus \mathcal{M}.
    \end{equation}
    \uIf(\tcp*[h]{no meaningful prototype to add}){$\forall i \in \mathcal{S}\setminus \mathcal{M}, g_i = 0$}{
    $Q \defeq$ Choose $r$ elements from  $\mathcal{S}\setminus\mathcal{M}$ at random \; 
    }\Else{
    $Q \defeq$ Set of indices of top $r$ largest \emph{non-zero} elements in $g = \{g_i\}_{i=1}^{|\{\mathcal{S} \setminus \mathcal{M}\}|}$ \;
    }
    $\mathcal{M} = \mathcal{M} \cup Q$ and 
     $\forall j \in \mathcal{T}$ update $D_j$.\;
}
\end{algorithm}

Below, we show the equivalence between the \texttt{BUILD} step of the proposed generalized PAM algorithm (in Algorithm~\ref{alg:genpam_build}) and the SPOTgreedy algorithm~\citep{bib:Gurumoorthy+JM:2021}. To this end, we define $D_j$ to be the dissimilarity between a point $j$ and the closest object in $\mathcal{M}$, i.e., 
\begin{equation}\label{eq:mindist_medoid}
    D_j = \min_{i \in \mathcal{M}} \{d(j, i)\}\;\left[\text{wherein dissimilarity measure}\;d: \mathcal{T} \times \mathcal{S} \mapsto [0, 1]\right].
\end{equation}

\begin{lemma} \label{lem:spot_build_pam}
Given a problem instance $(\mathcal{S}, \mathcal{T}, \mathbf{q}, d, k)$, for a fixed value of the input parameter $r \in \{1, \cdots, k\}$, the $k$ points identified by {SPOT}greedy is identical with the set of selected point by Algorithm~\ref{alg:genpam_build}. 
\end{lemma}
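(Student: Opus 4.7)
The plan is to prove Lemma~\ref{lem:spot_build_pam} by induction on the BUILD iterations, showing that at every round both algorithms hold the same $\mathcal{M}$ and hence pick the same next batch $Q$. First I will identify the BUILD step of Algorithm~\ref{alg:genpam_build} as greedy maximization of the set function
\begin{equation*}
F(\mathcal{M}) \;=\; -\Phi(\mathcal{M}), \qquad \Phi(\mathcal{M}) \;:=\; \sum_{j\in\mathcal{T}} q_j \min_{i\in\mathcal{M}} d(j,i),
\end{equation*}
where $D_j$ records $\min_{i\in\mathcal{M}} d(j,i)$ at the current state (the first medoid, with $\mathcal{M}=\emptyset$, is handled by the standard convention of picking the point minimizing $\sum_j q_j d(j,i)$, which corresponds to a large initial $D_j$). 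A one-line algebraic identity then yields the marginal gain
\begin{equation*}
\Phi(\mathcal{M}) - \Phi(\mathcal{M}\cup\{i\}) \;=\; \sum_{j\in\mathcal{T}} q_j\bigl(D_j - \min\{D_j,d(j,i)\}\bigr) \;=\; \sum_{j\in\mathcal{T}} q_j \max\{D_j - d(j,i),\,0\} \;=\; g_i,
\end{equation*}
so line~\eqref{eq:gain_genpam_build} is exactly picking the top-$r$ indices by marginal reduction of $\Phi$.

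Next I would recall the SPOTgreedy update from the appendix and identify its per-iteration selection criterion with the same quantity $g_i$. The key observation is that, for a fixed prototype set $\mathcal{M}$, the optimal transport plan underlying SPOTgreedy routes each target mass $q_j$ to its nearest element of $\mathcal{M}$, so the current OT cost equals $\Phi(\mathcal{M})$. Hence the marginal benefit of adding $i\in\mathcal{S}\setminus\mathcal{M}$ is $\Phi(\mathcal{M}) - \Phi(\mathcal{M}\cup\{i\}) = g_i$, which is precisely the quantity SPOTgreedy maximizes at that iteration.

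With these two facts in place, the induction closes easily. Both algorithms start from $\mathcal{M}=\emptyset$; if their $\mathcal{M}$ agree at the start of an iteration, the $D_j$'s and hence the gain vector $g$ are identical, so under any fixed tie-breaking rule they select the same top-$r$ set $Q$ and continue to agree. The degenerate branch $g\equiv 0$ is entered simultaneously by both algorithms, and both draw $r$ indices uniformly at random from $\mathcal{S}\setminus\mathcal{M}$; coupling the randomness (or stating the equivalence over the induced distributions on outputs) disposes of this case. Iterating for $\lceil k/r \rceil$ rounds delivers identical prototype sets of size $k$.

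The main obstacle I anticipate is the OT reduction step: verifying that the SPOTgreedy per-iteration update, when specialized from~\citep{bib:Gurumoorthy+JM:2021} to the general $\mathcal{S}\neq\mathcal{T}$ case, really does reduce to $\arg\max_i g_i$. This requires unpacking the primal/dual OT objective and confirming both that the nearest-prototype assignment is optimal for the transport subproblem and that the SPOTgreedy scoring agrees with the $g_i$ ranking beyond merely the argmax (so that the top-$r$ selections coincide for any $r$). The remaining bookkeeping — updates of $D_j$, batching with $r>1$, and tie-breaking — is routine once the gain identity is established.
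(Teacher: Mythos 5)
Your proposal is correct and takes essentially the same route as the paper: the paper likewise argues by induction on iterations and establishes, through the same one-line algebraic identity, that the SPOTgreedy gain $\beta_i = f(\mathcal{M}\cup\{i\}) - f(\mathcal{M})$ (with $f(\mathcal{M}) = \sum_{j\in\mathcal{T}} q_j \max_{i\in\mathcal{M}} Z_{ji}$, which is affinely equivalent to your $-\Phi$ since $\sum_j q_j = 1$) coincides coordinatewise with $g_i$ in Equation~\ref{eq:gain_genpam_build}, so the top-$r$ selections agree at every round. The one obstacle you flag as the main anticipated difficulty --- unpacking the primal/dual OT problem to verify that SPOTgreedy's per-iteration criterion is $g_i$ --- is dispatched in the paper by citation rather than derivation: it restates SPOTgreedy directly in its reduced form (Equation~\ref{eq:spot_eq10}, imported from \citep{bib:Gurumoorthy+JM:2021}), and since the identity $\beta_i = g_i$ holds for every $i$, agreement of the full ranking (not merely the argmax) is automatic.
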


The proof of Lemma~\ref{lem:spot_build_pam} is given in \conditional{Appendix~\ref{app:spot_build_pam}}{Appendix~B of the extended version of this paper \citep{chaudhuri22a}}. 
The objective of this lemma is to show that given a similarity matrix $Z \in [0, 1]^{m \times n}$ at each iteration Algorithm~\ref{alg:genpam_build} chooses a prototype by maximizing the following {function}:
\begin{equation}\label{eq:spot_eq10}
    f(\mathcal{M}) = \sum_{j \in \mathcal{T}} q_j \max_{i \in \mathcal{M}} Z_{ji}\;[\text{for } \mathcal{M} \subset \mathcal{S}],
\end{equation}
where $Z$ is a $|\mathcal{T}|\times |\mathcal{S}|$ similarity matrix between the target and the source sets, based on the given dissimilarity measure $d$. We compute similarity between $j$-th data point in $\mathcal{T}$ and $i$-th data point in $\mathcal{S}$ as 
\begin{equation}\label{eq:spot_sim}
    Z_{ji} = C - d(j, i), 
\end{equation}
where $C$ is a constant such that $C \geq \max_{i \in \mathcal{S}, j \in \mathcal{T}}\{d(j, i)\}$ and $C\geq1$. Note that  $f$ in (\ref{eq:spot_eq10}) is submodular \citep{bib:Gurumoorthy+JM:2021}.


For choosing $k$ prototypes, Algorithm~\ref{alg:genpam_build} incurs $O(|\mathcal{S}||\mathcal{T}|k/r)$ similarity comparisons, which is a computational bottleneck for large sets. We next propose an approximate greedy-search for Algorithm~\ref{alg:genpam_build} that scales independent of $|\mathcal{T}|$ and linearly in $|\mathcal{S}|$.




\subsection{Approximate Greedy Search}
\label{subsec:approx_greedy}

Due to simplicity and versatility of the greedy search framework, it has attracted a lot of attentions~\citep{bib:Minoux:1978,bib:Wei+IB:2014,bib:Badanidiyuru+V:2014,bib:Mirzasoleiman+BKVK:2015} for reducing the number of similarity comparisons. We choose Stochastic-Greedy~\citep{bib:Mirzasoleiman+BKVK:2015} as it is one of the simplest and empirically fast~\citep{bib:Mirzasoleiman+BKVK:2015} algorithms for approximate greedy search. Without approximation, at every step a greedy algorithm exhaustively searches through the set to find next the potential solution. Instead, {Stochastic-Greedy} considers a random subset for the same. The size of the subset is chosen depending on the approximation parameter $\epsilon$, such that the outcome is optimal within a $(1-e^{-1}-\epsilon)$ multiplicative factor.

{Stochastic-Greedy}~\citep{bib:Mirzasoleiman+BKVK:2015} assumes that the objective function evaluation can be done in $O(1)$. Also, for each point it examines, the exact value of the objective function has to be computed. Let us  understand its effect on the problem of prototype selection through an example. Assuming, $\mathcal{R} \subset \mathcal{S}\setminus\mathcal{M}$ be a random subset chosen by {Stochastic-Greedy} at the $i$-th iteration, it modifies Equation~\ref{eq:gain_genpam_build} to the following:
\begin{equation}\label{eq:gain_genpam_sg}
        g_i = \sum_{j \in \mathcal{T}} q_j \max \{D_j - d(j, i), 0\},\;  \forall i \in \mathcal{R}.
    \end{equation}
We note, for each point $i \in \mathcal{S}$ , it needs $O(|\mathcal{T}|)$ similarity comparisons to evaluate Equation~\ref{eq:gain_genpam_sg} exactly. 
This  might become impractical for a large target set $\mathcal{T}$, and to get around, one can estimate the value via the Monte-Carlo sampling on $\mathcal{T}$ (according to the distribution $\mathbf{q}$).
However, not all the elements in $\mathcal{R}$ are equally strong contenders for being the next prototype. Hence, it is desired to allocate more similarity comparisons to differentiate between contentious elements of $\mathcal{R}$ which are close to the local optimal, while saving them on the easily distinguishable sub-optimal elements. We mitigate this issue by identifying an approximate solution of Equation~\ref{eq:gain_genpam_sg} via multi-armed bandits (MAB)~\citep{bib:Berry+F:1985} that we discuss next.

\subsection{Using Multi-Armed Bandits (MAB) in Prototype Selection}\label{subsec:mab}
\paragraph{Background on Multi-Armed Bandits.} Multi-armed bandits~\citep{bib:Berry+F:1985} is a popular abstraction of sequential decision making under the uncertainty. An \emph{arm} of a bandit represents a decision, while \emph{pull} of an arm represents taking decision corresponding to that arm. Further, we assume each arm has a probability distribution associated with it, and when pulled, a real-valued reward is generated in i.i.d. fashion from the underlying probability-distribution {called reward-distribution}. This probability distribution is called reward distribution of that particular arm, and is unknown to the experimenter. For simplicity, we assume the reward-distribution for each arm is supported on the interval $[0, 1]$. Under this scenario, it is an interesting problem to identify the best arm incurring  minimal number of total  samples. 


To put formally, we assume $\mathcal{A}$ be the given set of $n$ arms with $\mu_a$ being the expected mean reward of arm $a \in \mathcal{A}$. For simplicity, we assume, $\mu_{a_1} \geq \mu_{a_2} \geq \cdots \mu_{a_n}\geq 0$. The problem of best arm identification is defined as identifying the arm $a$ which has the highest expected reward $\mu_a$.
As the underlying reward distribution of the arms are unknown, the only way to compare the arms based on their expected reward is to sample them sufficiently.
Hence, we seek an approximate probabilistic solution and modify the best-arm identification problem as follows. For a given tolerance $\nu \in [0, 1]$, we call an arm $a$ to be \emph{$(\nu, m)$-optimal} (for $m \leq n$) if $\mu_a \geq \mu_{a_m}-\nu$. 
For $m=1$, it is called problem of best-arm identification to identify an $(\nu, 1)$-optimal arm.
 When $ 1 \leq m \leq |\mathcal{A}|$, the problem generalizes to identification of the best subset of size $m$~\citep{bib:Kalyanakrishnan+TAS:2012}, wherein the objective is to identify any $m$ $(\nu, m)$-optimal arms.
 Inheriting notations from~\citet{bib:RoyChaudhuri+K:2019}, let us denote the set of all $(\nu, m)$-optimal arms by $\mathcal{TOP}_m(\nu) \defeq \{a: \mu_a \geq \mu_{a_m}-\nu\}$\footnote{Given a set of arms $\mathcal{A}$ and an $\nu \in (0, 1]$, there might be more than $m$ arms in $\mathcal{TOP}_m(\nu)$.}. 
 Then, an algorithm is said to solve best-subset identification problem if presented with a set of arms $\mathcal{A}$, size of the output set $m$, a tolerance $\nu \in (0, 1)$, and an error probability bound $\delta \in (0, 1)$, it stops with probability 1 after a finite number of steps, and output an $m$-sized subset $Q_{(m)} \in \mathcal{A}$, such that $\Pr\{Q_{(m)} \subset \mathcal{TOP}_m(\nu)\} \geq 1 -\delta$. {The efficiency of solving a best-subset identification lies in incurring as low number of samples as possible.}

\paragraph{Applying MAB to Approximate Equation~\ref{eq:gain_genpam_sg}.} We have seen in Section~\ref{subsec:approx_greedy} that evaluating Equation~\ref{eq:gain_genpam_sg} is intensive in terms of the number of similarity comparisons for a large target set $\mathcal{T}$. 
During every iteration $\mathcal{M}$ remains fixed, so is the dissimilarity $D_j$ between a point $j \in \mathcal{T}$ and the closest object in $\mathcal{M}$. Therefore, if we sample from $\mathcal{T}$ using the probability distribution $\mathbf{q}$, by normalization assumption of the dissimilarity measure, for every element $i \in \mathcal{S}$ we shall get a probability distribution over the  $[0, 1]$, with the true mean being given by Equation~\ref{eq:gain_genpam_build}. Hence, we can treat elements of the source set $\mathcal{S}$ as arms of a MAB instance. Thus, at each round, after we select a random subset $\mathcal{R} \subset \mathcal{S}$, the problem reduces to identification of the best subset of size $r$. Based on this idea, next, we propose an algorithm, and analyze its correctness, and upper bound the incurred number of similarity comparisons in the worst-case. In this paper, we confine to $r=1$, i.e., we select one element per iteration.




The best-arm identification has attracted a lot of attention over the years~\citep{bib:Even-Dar+MM:2002,bib:Even-Dar+MM:2006,bib:Karnin+KS:2013,bib:Jamieson+MNB:2014}. Among an array of algorithms, {Median Elimination}~\citep{bib:Even-Dar+MM:2002} is well-known due to its simplicity, and for incurring a number of samples that is within a constant factor of the lower bound~\citep{bib:Mannor+T:2004}. Recently, ~\citet{bib:Hassidim+KS:2020} have proposed the approximate best arm ({ABA}) algorithm for identification of an $(\nu, 1)$-optimal arm, and have proven its efficiency in sample-complexity compared to the existing algorithms like {Median Elimination}~\citep{bib:Even-Dar+MM:2002}.
We restate Theorem~1 from \citet{bib:Hassidim+KS:2020} on upperbounding the sample-complexity of {ABA}.

\begin{lemma}[Restatement of {\citep[Theorem~1]{bib:Hassidim+KS:2020}}]\label{lem:aba_sc}
Suppose, given a set of arms $\mathcal{A}$, the underlying reward distribution of its each arm is supported on $[0, 1]$. Then, given an $\nu \in (0, 1)$, and $\delta \in (0, 0.5)$, {ABA} initialized with $\alpha=1-e^{-1}$ returns an $(\nu, 1)$-optimal arm with probability at least $1-\delta$ incurring no more than $18\frac{|\mathcal{A}|}{\nu^2}\ln\frac{1}{\delta}$ samples.
\end{lemma}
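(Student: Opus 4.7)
The plan is to reconstruct a median-elimination-style analysis tailored to the approximate $(\nu,1)$-optimal regime that {ABA} targets. At a high level, the argument splits into three ingredients: a concentration step that controls empirical-mean estimates, a correctness step that ensures a near-optimal arm survives every elimination round, and a sample-budget accounting that collapses to the stated closed-form bound. I would treat this as a reconstruction of the analysis in the cited paper rather than a wholly new argument, since the constant $18$ is tightly linked to the specific parameter choices there.

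First I would organize {ABA} as a sequence of rounds indexed by $\ell$, in which each of the $n_\ell$ surviving arms is pulled a deterministic number of times $t_\ell$ and then a constant fraction is pruned based on empirical means. Since rewards are supported on $[0,1]$, Hoeffding's inequality gives $\Pr(|\hat\mu_a-\mu_a|>\epsilon_\ell)\leq 2\exp(-2t_\ell\epsilon_\ell^2)$ per arm. Choosing per-round failure budgets $\delta_\ell$ with $\sum_\ell \delta_\ell \leq \delta$ and taking $t_\ell$ large enough that the round-$\ell$ union-bound failure is at most $\delta_\ell$, I would define the good event $\mathcal{E}$ as ``all empirical means in every round are $\epsilon_\ell$-accurate''. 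On $\mathcal{E}$, a straightforward induction on $\ell$ shows that the survivors after round $\ell$ contain some arm whose true mean lies within $\sum_{\ell'\leq\ell}\epsilon_{\ell'}$ of $\mu_{a_1}$. Arranging $\sum_\ell \epsilon_\ell \leq \nu$ then yields a $(\nu,1)$-optimal output with probability at least $1-\delta$.

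For the sample-complexity upper bound I would sum $n_\ell t_\ell$ across rounds; the geometric decay of $n_\ell$ together with the $t_\ell \sim \epsilon_\ell^{-2}\ln(1/\delta_\ell)$ scaling makes the first round dominant, so the total collapses to a multiple of $|\mathcal{A}|\nu^{-2}\ln(1/\delta)$ once $\epsilon_\ell$, $\delta_\ell$, and the elimination rate are tuned. The main obstacle will be discharging three conservation constraints simultaneously with the stated constants: total failure probability at most $\delta$, total accuracy slack at most $\nu$, and total pulls at most $18|\mathcal{A}|\nu^{-2}\ln(1/\delta)$. The specific value $\alpha=1-e^{-1}$ is precisely the elimination fraction at which these three budgets balance in the Hassidim et al.\ construction, so matching the constant $18$ amounts to reproducing that optimization rather than contenting oneself with a looser union bound; any attempt of mine would have to recover the same joint tuning to nail down the leading constant.
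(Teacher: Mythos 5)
This lemma has no proof in the paper at all: it is, as its own title says, a verbatim restatement of Theorem~1 of \citet{bib:Hassidim+KS:2020}, and the paper's justification is the citation itself. So your attempt can only be measured against the analysis in that source, and there the comparison reveals a concrete problem. Your plan is the classical median-elimination template (Even-Dar et al.), not ABA's analysis, and the specific good event you choose breaks the stated bound: requiring \emph{all} empirical means in round $\ell$ to be $\epsilon_\ell$-accurate forces a per-arm failure budget of $\delta_\ell / n_\ell$, so Hoeffding gives $t_\ell = \Theta\left(\epsilon_\ell^{-2}\ln(n_\ell/\delta_\ell)\right)$, and summing $n_\ell t_\ell$ yields $\Theta\left(\frac{|\mathcal{A}|}{\nu^2}\ln\frac{|\mathcal{A}|}{\delta}\right)$ --- an extra $\ln|\mathcal{A}|$ inside the logarithm that the claimed bound $18\frac{|\mathcal{A}|}{\nu^2}\ln\frac{1}{\delta}$ does not contain. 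This is precisely the trap the median-elimination proof is designed to avoid: it never union-bounds over all surviving arms, but instead conditions only on the \emph{current best} arm's estimate being accurate and controls, via Markov's inequality applied to the expected number of inferior arms whose empirical means overtake it, the probability that the best survivor degrades by more than $\epsilon_\ell$. Without that (or an equivalent) device, your sample accounting proves a strictly weaker statement.

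Beyond the logarithmic factor, the remaining content of the lemma is exactly what your sketch defers: ABA is not the constant-fraction-pruning scheme you describe (its elimination rule and round schedule differ from median elimination, which is why it improves the constants that motivated \citet{bib:Hassidim+KS:2020} in the first place), and the leading constant $18$, the specific instantiation $\alpha = 1 - e^{-1}$, and the restriction on $\delta$ all fall out of that finer analysis rather than from tuning the median-elimination budgets --- indeed the tightest constants obtainable from the template you outline are substantially larger. You acknowledge this yourself in the last paragraph, but since the lemma \emph{is} the quantitative statement (the downstream complexity bound in Theorem~\ref{thm:disq_coplexity} inherits the $18$ and the $\nu^{-2}\log\frac{k}{\delta}$ form directly from it), a reconstruction that only reaches $O\left(\frac{|\mathcal{A}|}{\nu^2}\ln\frac{|\mathcal{A}|}{\delta}\right)$ with unspecified constants does not establish it; the correct move here is either to cite the source, as the paper does, or to actually reproduce its arm-elimination analysis, including the Markov-type counting step your good event replaces.
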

Now, we are ready to introduce an MAB-based prototype selection algorithm, {\ouralgo}, and present its analysis in the next section.

\subsection{Efficient Prototype Selection by Stochastic Greedy Search and MAB}


We propose a meta algorithm {\ouralgo} in Algorithm~\ref{alg:our_algo} that at each iteration selects a random subset $\mathcal{R} \subset \mathcal{S}$ (like {Stochastic-Greedy}~\citep{bib:Mirzasoleiman+BKVK:2015}) to reduce the search space, and then applies {ABA} to identify the best element. 
We emulate pulling of an arm via sampling the target set $\mathcal{T}$ according to the distribution $\mathbf{q}$ specified by the problem instance.
To this end, we define the method {Pull} in Algorithm~\ref{alg:pull}, that is internally used by the {ABA} inside \ouralgo{}. 
We note, due to normalization assumption, the objective function $f(\cdot)$ defined in Equation~\ref{eq:spot_eq10} must lie in the range $[0, 1]$. In other words, if for all $i \in \mathcal{S}$, and $j \in \mathcal{T}$, then we can take $C=1$ to make $Z_{i,j} \in [0, 1]$. Therefore, for any $\mathcal{M} \subseteq \mathcal{S}$, $f(\mathcal{M}) \in [0, 1]$. Subsequently, the assumptions to apply ABA remain valid.

\begin{algorithm}[ht]
\caption{{\ouralgo}: Randomized Greedy Prototype Selection with MAB}\label{alg:our_algo}
\DontPrintSemicolon
\SetKwInOut{Input}{Input}
\SetKwInOut{Output}{Output}
\SetKwRepeat{Do}{do}{while}
\SetKwInOut{Assert}{Assert}
\Input{Problem instance $(\mathcal{S}, \mathcal{T}, \mathbf{q}, d, k)$, tolerance $\epsilon\in (0, 1)$, $\nu \in (0, 1-\epsilon -1/e)$, and acceptable error probability $\delta \in (0, 0.05)$.}
\Output{Set of prototype $\mathcal{M} \subset \mathcal{S}$, such that $|\mathcal{M}| = k$}

\Assert{$\forall (y, x) \in \mathcal{T} \times \mathcal{S}, d(y, x) \in [0, 1]$.}

Define a similarity function $sim: \mathcal{T} \times \mathcal{S} \mapsto [0, 1]$ as $sim \defeq 1 - d(y, x), \forall (y, x) \in \mathcal{T} \times \mathcal{S}$.\;

Set $\mathcal{M} = \emptyset$

\While{$|\mathcal{M}| \leq k$}{
    Select a subset $\mathcal{R} \subset \mathcal{S} \setminus \mathcal{M}$ of size $|\mathcal{R}| = \ceil*{\frac{|\mathcal{S}|}{k}\log\frac{1}{\epsilon}}$, using uniform random sampling with replacement.\;
    
    
      Apply {ABA} on $\mathcal{R}$, with tolerance $\frac{\nu}{\revision{k}(1-1/e - \epsilon)}$, error threshold $\frac{\delta}{k}$, and return an arm $a_\text{out}$.
    
    $\mathcal{M} = \mathcal{M} \cup \{a_{\text{out}}\}$.
}
\end{algorithm}

\begin{algorithm}[ht]
\caption{{Pull}}\label{alg:pull}
\SetKwInOut{Input}{Input}
\SetKwInOut{Output}{Output}
\Input{Sampling domain $\mathcal{T}$, sampling distribution $\mathbf{q}$, chosen set of prototypes $\mathcal{M} \subset \mathcal{S}$, an element $i \in \mathcal{S}\setminus \mathcal{M}$, similarity function $sim: \mathcal{T} \times \mathcal{S} \mapsto [0, 1]$,.}
\Output{Contribution value  of $i$.}
Sample an element $j \in \mathcal{T}$ according to sampling distribution $\mathbf{q}$.\;

Return $\max \{D_j - d(j, i), 0\}$, wherein $D_j$ is defined in Equation~\ref{eq:mindist_medoid}.
\end{algorithm}

To derive the approximation guarantee for \ouralgo{}, we inherit the analysis of {Stochastic-Greedy} from \citet[Theorem 1]{bib:Mirzasoleiman+BKVK:2015} and lower bound the probability of choosing an element from the optimal set in Lemma~\ref{lem:sample_prob}. Then, we apply this result to prove the approximation ratio. 


\begin{lemma}
\label{lem:sample_prob}
The expected gain of \ouralgo{} is at least $\frac{1-\epsilon}{k} \sum_{a \in \mathcal{M}^* \setminus \mathcal{M}} \left(\Delta(a|\mathcal{M})- \frac{\nu}{\revision{k}(1-\epsilon-1/e)}\right)$ per iteration, where $\mathcal{M}$ is a current solution. 
\end{lemma}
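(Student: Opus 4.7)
The plan is to combine the random-subset argument of Mirzasoleiman et al.'s Stochastic-Greedy with the ABA guarantee of Lemma~\ref{lem:aba_sc}, but in such a way that the ABA tolerance $\nu' := \nu/(1-1/e-\epsilon)$ is absorbed inside each summand rather than subtracted globally. Let $\mathcal{R}$ denote the random subset drawn in the current iteration and $a_{\text{out}}$ the arm returned by ABA on $\mathcal{R}$.

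First I would reduce the expected gain to an expectation involving the shifted-and-truncated marginal gain $g(a) := \max\{\Delta(a|\mathcal{M}) - \nu',\, 0\}$. Lemma~\ref{lem:aba_sc}, applied with tolerance $\nu'$ and confidence $\delta/k$, yields $\Delta(a_{\text{out}}|\mathcal{M}) \geq \max_{a \in \mathcal{R}} \Delta(a|\mathcal{M}) - \nu'$ with probability at least $1 - \delta/k$, and monotonicity of $f$ forces $\Delta(a_{\text{out}}|\mathcal{M}) \geq 0$; together these give $\Delta(a_{\text{out}}|\mathcal{M}) \geq \max_{a \in \mathcal{R}} g(a)$ on the high-probability event (the failure mass $\delta/k$ is absorbed after taking expectation).

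Second I would bound the one-sided sampling probability. Since $\mathcal{R}$ consists of $\lceil (|\mathcal{S}|/k)\log(1/\epsilon)\rceil$ independent uniform draws with replacement from $\mathcal{S}\setminus\mathcal{M}$, for every $a \in \mathcal{S}\setminus\mathcal{M}$,
\[
\Pr[a \in \mathcal{R}] \ \geq\ 1 - \epsilon^{1/k} \ \geq\ \tfrac{1-\epsilon}{k},
\]
where the last inequality is the elementary bound $k(1 - e^{-x/k}) \geq 1 - e^{-x}$ for $k \geq 1,\, x \geq 0$, applied at $x = \log(1/\epsilon)$.

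Third I would lower-bound $\mathbb{E}_\mathcal{R}[\max_{a \in \mathcal{R}} g(a)]$ via a Mirzasoleiman-style argument extended to the non-negative $g$. Listing $\mathcal{M}^* \setminus \mathcal{M}$ as $v_1,\ldots,v_\eta$ in decreasing order of $g$ and writing $c_j := \Pr[v_j \in \mathcal{R},\ v_1,\ldots,v_{j-1} \notin \mathcal{R}]$, one has $\mathbb{E}[\max_{a \in \mathcal{R}} g(a)] \geq \sum_j c_j\, g(v_j)$. Both $(c_j)_j$ and $(g(v_j))_j$ are non-increasing in $j$, so Chebyshev's sum inequality gives
\[
\sum_{j=1}^{\eta} c_j\, g(v_j) \ \geq\ \frac{1}{\eta}\Bigl(\sum_{j=1}^{\eta} c_j\Bigr)\Bigl(\sum_{j=1}^{\eta} g(v_j)\Bigr) \ \geq\ \frac{1-\epsilon^{\eta/k}}{\eta}\sum_{a \in \mathcal{M}^*\setminus\mathcal{M}} g(a) \ \geq\ \frac{1-\epsilon}{k}\sum_{a \in \mathcal{M}^*\setminus\mathcal{M}} g(a),
\]
where $\sum_j c_j = \Pr[\mathcal{R} \cap (\mathcal{M}^*\setminus\mathcal{M}) \neq \emptyset] \geq 1 - \epsilon^{\eta/k}$ and the final step reuses the concavity bound with $x = (\eta/k)\log(1/\epsilon)$ together with $\eta \leq k$. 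Chaining the three steps and using the trivial $g(a) \geq \Delta(a|\mathcal{M}) - \nu'$ delivers the lemma.

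The main obstacle is the third step. The classical Mirzasoleiman argument bounds $\mathbb{E}[\max_{\mathcal{R}}\Delta(\cdot|\mathcal{M})]$ by $\frac{1-\epsilon}{k}[f(\mathcal{M}^*\cup\mathcal{M}) - f(\mathcal{M})]$ using submodularity of $f$, whereas here the argument must be run on the shifted-and-truncated $g$, which is only guaranteed to be non-negative. Chebyshev's sum inequality on the coupled orderings $(c_j, g(v_j))$ is the natural workaround, but correctness depends on verifying the monotonicity of $c_j$ in $j$ and on handling the regime $\eta < k$ without losing the $(1-\epsilon)/k$ factor.
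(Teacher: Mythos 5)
Your proposal is correct and follows the paper's skeleton --- the $(1-\epsilon)/k$ sampling bound for the random subset $\mathcal{R}$, the ABA tolerance $\nu_0=\nu/(1-\epsilon-1/e)$ charged per summand, and the deferral of the $\delta/k$ failure event to the union bound in the main theorem --- but your core averaging step is executed by a genuinely different device. The paper (Equations~\ref{eq:ex_lb_1} and~\ref{eq:ex_lb_2}) follows Mirzasoleiman et al.\ verbatim: it argues that, conditioned on $\mathcal{R}\cap(\mathcal{M}^*\setminus\mathcal{M})\neq\emptyset$, a uniformly random element of the intersection is (by exchangeability of the uniform draws) uniform over $\mathcal{M}^*\setminus\mathcal{M}$, so the max over the intersection dominates the average $\frac{1}{|\mathcal{M}^*\setminus\mathcal{M}|}\sum_a \Delta(a|\mathcal{M})$, and it subtracts $\nu_0$ \emph{after} the max before pushing it inside the sum (where it is in fact an equality, since $\frac{1}{\eta}\sum_a(\Delta(a|\mathcal{M})-\nu_0)=\frac{1}{\eta}\sum_a\Delta(a|\mathcal{M})-\nu_0$). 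You instead work with the truncated gain $g(a)=\max\{\Delta(a|\mathcal{M})-\nu_0,0\}$, use monotonicity of $f$ to get $\Delta(a_{\text{out}}|\mathcal{M})\geq\max_{a\in\mathcal{R}}g(a)$ on the ABA-success event, and then lower-bound $\E[\max_{a\in\mathcal{R}}g(a)]$ via the disjoint events $E_j=\{v_j\in\mathcal{R},\,v_1,\dots,v_{j-1}\notin\mathcal{R}\}$ and Chebyshev's sum inequality. The two verifications you flag do go through: with $m$ i.i.d.\ uniform draws from a ground set of size $N$, $c_j=(1-\tfrac{j-1}{N})^m-(1-\tfrac{j}{N})^m$ is non-increasing in $j$ because $t\mapsto(1-t/N)^m$ is convex (so its successive decrements shrink), and the regime $\eta<k$ is handled by your concavity bound $\frac{1-\epsilon^{\eta/k}}{\eta}\geq\frac{1-\epsilon}{k}$, which is exactly the paper's Equation~\ref{eq:prob_intersect_2} in disguise. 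What your route buys is rigor and a marginally stronger bound: the $E_j$-plus-Chebyshev decomposition makes precise the paper's informal ``uniformly random element'' claim (the paper's Equation~\ref{eq:ex_lb_1} even takes a max over the random set $\mathcal{R}\cap(\mathcal{M}^*\setminus\mathcal{M})$ outside the expectation, an abuse your version avoids), and the truncation yields $\frac{1-\epsilon}{k}\sum_a\max\{\Delta(a|\mathcal{M})-\nu_0,0\}$, which dominates the stated bound. Two cosmetic remarks: your second step ($\Pr[a\in\mathcal{R}]\geq 1-\epsilon^{1/k}$) is never actually used in the final chain and can be deleted; and your parenthetical that the $\delta/k$ failure mass is ``absorbed after taking expectation'' is exactly as loose as the paper's own ``[w.p.\ $\geq 1-\delta/k$]'' annotation at Equation~\ref{eq:ex_lb_approx_1} (both quietly ignore that conditioning on ABA success reweights the distribution of $\mathcal{R}$), so no gap arises relative to the paper's standard of rigor.
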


\begin{proof}
To prove the lemma, let us first lower bound the probability of $\mathcal{R} \cap (\mathcal{M}^*\setminus\mathcal{M}) \neq \emptyset$.
We note,
\begin{align}\label{eq:prob_intersect_1}
    & \Pr\{\mathcal{R} \cap (\mathcal{M}^*\setminus\mathcal{M}) = \emptyset\} & \nonumber\\ 
    & = \left(1-\frac{|\mathcal{M}^* \setminus \mathcal{M}|}{|\mathcal{S}\setminus\mathcal{M}|}\right)^{|\mathcal{R}|} \leq  \exp\left(-|\mathcal{R}|\frac{|\mathcal{M}^* \setminus \mathcal{M}|}{|\mathcal{S}\setminus\mathcal{M}|}\right) \nonumber\\
    & \leq \exp\left(-\frac{|\mathcal{R}|}{|\mathcal{S}|}|\mathcal{M}^* \setminus \mathcal{M}|\right) =  \exp\left(-k\frac{|\mathcal{R}|}{|\mathcal{S}|}\frac{|\mathcal{M}^* \setminus \mathcal{M}|}{k}\right).
\end{align}
Now, using the concavity of $exp(-\frac{k|\mathcal{R}|}{|\mathcal{S}|}x)$ with respect to $x$, and given that $|\mathcal{M}^* \setminus \mathcal{M}| \in \{0, 1, \cdots, k\}$,
we can re-write Equation~\ref{eq:prob_intersect_1} as
\begin{align}\label{eq:prob_intersect_2}
&\implies \Pr\{\mathcal{R} \cap (\mathcal{M}^*\setminus\mathcal{M}) \neq \emptyset\} \nonumber\\
    & \geq \left(1-\exp\left(-k\frac{|\mathcal{R}|}{|\mathcal{S}|}\right)\right)\frac{|\mathcal{M}^* \setminus \mathcal{M}|}{k},  \nonumber\\
    & = \left(1-\exp\left(-\ceil*{\frac{|\mathcal{S}|}{k}\log\frac{1}{\epsilon}}\frac{k}{|\mathcal{S}|}\right)\right)\frac{|\mathcal{M}^* \setminus \mathcal{M}|}{k} \left[\because |\mathcal{R}| = \ceil*{\frac{|\mathcal{S}|}{k}\log\frac{1}{\epsilon}}\; \text{in Algorithm}~\ref{alg:our_algo}\right],  \nonumber\\
    & \geq \left(1-\log\frac{1}{\epsilon}\right)\frac{|\mathcal{M}^* \setminus \mathcal{M}|}{k} \geq (1-\epsilon)\frac{|\mathcal{M}^* \setminus \mathcal{M}|}{k}.
\end{align}

Now, we lower bound the value of $\Delta(a|\mathcal{M})$ by lower-bounding the approximation offered by {ABA}.
Suppose, at some iteration, $\mathcal{M}$ be the already chosen set of prototypes.
Also, let $\hat{a} \defeq \argmax_{a \in \mathcal{R}} \Delta(a|\mathcal{M})$ be the  
 the locally optimal solution, and $a_\text{out}$ be the output of {ABA} inside a single iteration of \ouralgo{}.
Assuming $\mathcal{R} \cap (\mathcal{M}^*\setminus\mathcal{M}) \neq \emptyset$, we note,  the marginal contribution of $\hat{a}$ is at least as the marginal contribution of any  element of $\mathcal{R} \cap (\mathcal{M}^*\setminus\mathcal{M})$. 
Therefore, $\Delta(\hat{a}|\mathcal{M})  \geq  \max_{\mathcal{R} \cap (\mathcal{M}^*\setminus\mathcal{M})} \Delta(\hat{a}|\mathcal{M})$ and
\begin{equation}\label{eq:ex_lb_1}
    \E[\Delta(\hat{a}|\mathcal{M})|\mathcal{M}]  \geq  \Pr\{\mathcal{R} \cap (\mathcal{M}^*\setminus\mathcal{M}) \neq \emptyset\} \cdot \max_{a \in \mathcal{R} \cap (\mathcal{M}^*\setminus\mathcal{M})} \Delta(\hat{a}|\mathcal{M}).
\end{equation}
We note, $\Delta(\hat{a}|\mathcal{M})$ is at least as much as the contribution of an element uniformly chosen at random from $\mathcal{R} \cap (\mathcal{M}^*\setminus\mathcal{M})$. 
Again as $\mathcal{R}$ is chosen uniformly at random from $|\mathcal{S} \setminus \mathcal{M}|$, each element of $\mathcal{M}^*\setminus\mathcal{M}$ is equally likely to belong to $\mathcal{R}$.
Hence, choosing an element uniformly at random from $\mathcal{R} \cap (\mathcal{M}^*\setminus\mathcal{M})$ is equivalent to choosing it uniformly at random from $\mathcal{M}^*\setminus\mathcal{M}$.
Putting together, from Equation~\ref{eq:ex_lb_1} we get
\begin{equation}~\label{eq:ex_lb_2}
\E[\Delta(\hat{a}|\mathcal{M})|\mathcal{M}]  \geq \Pr\{\mathcal{R} \cap (\mathcal{M}^*\setminus\mathcal{M}) \neq \emptyset\} \frac{1}{|\mathcal{M}^*\setminus\mathcal{M}|} \sum_{a \in \mathcal{M}^*\setminus\mathcal{M}} \Delta(a|\mathcal{M}).
\end{equation}
In \ouralgo{}, the arm $a_\text{out}$ returned by the subroutine {ABA} is not necessarily identical to $\hat{a}$, but  $\left(\frac{\nu}{\revision{k}(1-\epsilon-1/e)}, 1\right)$-optimal (in $\mathcal{R}$) with probability at least $1-\delta/k$. Therefore, letting $\nu_0\defeq\frac{\nu}{\revision{k}(1-1/e - \epsilon)}$, $\left(\Delta(\hat{a}|\mathcal{M}) - \Delta(a_\text{out}|\mathcal{M})\right) \leq \nu_0$ holds with probability at least $1-\delta/k$. Now, assuming $a_\text{out}$ is $(\nu_0, 1)$-optimal, and following the argument we used to build Equations~\ref{eq:ex_lb_1} and \ref{eq:ex_lb_2}, we can write

\begingroup
\allowdisplaybreaks
\begin{align}
&\E[\Delta(a_\text{out}|\mathcal{M})|\mathcal{M}] \nonumber\\
& \geq  \Pr\{\mathcal{R} \cap (\mathcal{M}^*\setminus\mathcal{M}) \neq \emptyset\}\left(\max_{a \in \mathcal{R} \cap (\mathcal{M}^*\setminus\mathcal{M})} \Delta(\hat{a}|\mathcal{M})-\nu_0\right)\; [\text{w. p.} \geq 1-\frac{\delta}{k}], \label{eq:ex_lb_approx_1}\\
& \geq \Pr\{\mathcal{R} \cap (\mathcal{M}^*\setminus\mathcal{M}) \neq \emptyset\} \left(\frac{1}{|\mathcal{M}^*\setminus\mathcal{M}|} \sum_{a \in \mathcal{M}^*\setminus\mathcal{M}}( \Delta(a|\mathcal{M})- \nu_0)\right), \nonumber\\
& \geq \Pr\{\mathcal{R} \cap (\mathcal{M}^*\setminus\mathcal{M}) \neq \emptyset\} \frac{1}{|\mathcal{M}^*\setminus\mathcal{M}|} \sum_{a \in \mathcal{M}^*\setminus\mathcal{M}} \left(\Delta(a|\mathcal{M})-\nu_0\right), \nonumber\\
& \geq (1-\epsilon)\frac{|\mathcal{M}^* \setminus \mathcal{M}|}{k} \frac{1}{|\mathcal{M}^*\setminus\mathcal{M}|}  \sum_{a \in \mathcal{M}^*\setminus\mathcal{M}} (\Delta(a|\mathcal{M})-\nu_0)\;\; [\text{using Equation~\ref{eq:prob_intersect_2}}],\nonumber\\
& = \frac{1-\epsilon}{k} \sum_{a \in \mathcal{M}^* \setminus \mathcal{M}} (\Delta(a|\mathcal{M})-\nu_0).\label{eq:ex_lb_approx_2}
\end{align}
\endgroup
\end{proof}

\begin{theorem}[Correctness of \ouralgo{}]\label{thm:our_algo}
Suppose, given a problem instace $(\mathcal{S}, \mathcal{T}, \mathbf{q}, d, k)$, $\mathcal{M}^*$ is the optimal solution, and for a tolerance $\epsilon\in (0, 1 -\revision{1/e})$, $\revision{\nu > 0 }$, and an acceptable error probability $\delta$ such that $\delta\revision{/k} \in (0, 0.05)$, let $\mathcal{M}_{ALG}$ be the output by \ouralgo{}. Then, $f(\mathcal{M}_{ALG}) \geq \left(1 - e^{-1} - \epsilon \right)f(\mathcal{M}^*) - \nu$ holds in expectation with probability at least $1 -\delta$, where $f(\cdot)$ is defined in Equation~\ref{eq:spot_eq10}. 
\end{theorem}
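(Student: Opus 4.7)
The plan is to combine Lemma~\ref{lem:sample_prob} with the standard recurrence argument for greedy maximization of monotone submodular functions, and then handle the high probability statement by a union bound over the $k$ calls to \texttt{ABA}. Let $\mathcal{M}_0 = \emptyset, \mathcal{M}_1, \ldots, \mathcal{M}_k = \mathcal{M}_{ALG}$ denote the prototype sets after each iteration of \ouralgo{}, and set $R_i \defeq f(\mathcal{M}^*) - f(\mathcal{M}_i)$.

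First I would apply monotonicity and submodularity of $f$ (recalled from~\citep{bib:Gurumoorthy+JM:2021}, following Equation~\ref{eq:spot_eq10}): for any current set $\mathcal{M}_i$,
\begin{equation*}
\sum_{a \in \mathcal{M}^* \setminus \mathcal{M}_i} \Delta(a \mid \mathcal{M}_i) \;\geq\; f(\mathcal{M}^* \cup \mathcal{M}_i) - f(\mathcal{M}_i) \;\geq\; f(\mathcal{M}^*) - f(\mathcal{M}_i) \;=\; R_i.
\end{equation*}
Plugging this into Lemma~\ref{lem:sample_prob} with $\nu_0 \defeq \nu/(1 - e^{-1} - \epsilon)$, and bounding $|\mathcal{M}^* \setminus \mathcal{M}_i| \leq k$ on the additive error term, yields the per-step recurrence (conditional on the event that every \texttt{ABA} call so far has returned an $(\nu_0,1)$-optimal arm)
\begin{equation*}
\E[R_{i+1} \mid \mathcal{M}_i] \;\leq\; \Bigl(1 - \tfrac{1-\epsilon}{k}\Bigr) R_i + (1-\epsilon)\,\nu_0.
\end{equation*}

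Next I would unroll this recurrence from $i=0$ to $i=k-1$, using $R_0 = f(\mathcal{M}^*)$ (since $f(\emptyset) = 0$) and a geometric-sum bound. This gives
\begin{equation*}
\E[R_k] \;\leq\; \Bigl(1 - \tfrac{1-\epsilon}{k}\Bigr)^k f(\mathcal{M}^*) \;+\; (1-\epsilon)\,\nu_0 \sum_{i=0}^{k-1}\Bigl(1-\tfrac{1-\epsilon}{k}\Bigr)^i.
\end{equation*}
The first coefficient I would bound via $(1-\tfrac{1-\epsilon}{k})^k \leq e^{-(1-\epsilon)}$, and then via the elementary inequality $e^{-(1-\epsilon)} \leq e^{-1} + \epsilon$ (e.g., by convexity of the exponential, since $e^{-1+\epsilon} - e^{-1} = e^{-1}(e^\epsilon - 1) \leq \epsilon$ on the relevant range). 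The geometric sum collapses to at most $k/(1-\epsilon)$, so the additive noise is at most $k\nu_0 \cdot (1 - e^{-1+\epsilon}) \leq k\nu_0(1 - e^{-1} - \epsilon) \cdot \tfrac{1}{1-e^{-1}-\epsilon}$ after cancellation with the definition of $\nu_0$, giving total noise $\nu$ (this cancellation is the reason the algorithm calls \texttt{ABA} with the inflated tolerance $\nu_0$ rather than $\nu$ directly). Rearranging, $\E[f(\mathcal{M}_{ALG})] \geq (1 - e^{-1} - \epsilon) f(\mathcal{M}^*) - \nu$.

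Finally, I would handle the high probability statement. Each of the $k$ invocations of \texttt{ABA} inside Algorithm~\ref{alg:our_algo} is run with error threshold $\delta/k$, so by Lemma~\ref{lem:aba_sc} each returns a $(\nu_0, 1)$-optimal arm with probability at least $1-\delta/k$. A union bound over the $k$ iterations shows that all \texttt{ABA} calls succeed simultaneously with probability at least $1-\delta$, which is exactly the event on which the conditional recurrence above is valid, yielding the claimed expectation bound with probability at least $1-\delta$. The main obstacle I anticipate is the bookkeeping in step two---specifically, checking that the accumulated additive noise term indeed collapses to $\nu$ after using $\nu_0 = \nu/(1-e^{-1}-\epsilon)$ and the geometric-sum bound, rather than to a spurious $k$-dependent quantity; the rest of the argument is a direct application of the standard submodular greedy analysis adapted to a noisy, sampling-based maximization oracle.
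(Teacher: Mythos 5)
Your proposal follows the paper's template at the top and bottom: the use of Lemma~\ref{lem:sample_prob}, the monotone--submodular bound $\sum_{a\in\mathcal{M}^*\setminus\mathcal{M}_i}\Delta(a\mid\mathcal{M}_i)\geq f(\mathcal{M}^*)-f(\mathcal{M}_i)$, the inequality $1-e^{-(1-\epsilon)}\geq 1-e^{-1}-\epsilon$, and the union bound over the $k$ \texttt{ABA} calls at error $\delta/k$ are all exactly the paper's steps. But the middle step---the one you yourself flagged as the main obstacle---fails, and it fails in precisely the way you feared. Bounding $|\mathcal{M}^*\setminus\mathcal{M}_i|\leq k$ on the error term turns Lemma~\ref{lem:sample_prob} into the recurrence $\E[R_{i+1}\mid\mathcal{M}_i]\leq\bigl(1-\frac{1-\epsilon}{k}\bigr)R_i+(1-\epsilon)\nu_0$, and unrolling it gives an accumulated noise of
\begin{equation*}
(1-\epsilon)\,\nu_0\sum_{i=0}^{k-1}\Bigl(1-\tfrac{1-\epsilon}{k}\Bigr)^i \;=\; k\,\nu_0\Bigl(1-\bigl(1-\tfrac{1-\epsilon}{k}\bigr)^k\Bigr)\;\geq\; k\,\nu_0\bigl(1-e^{-(1-\epsilon)}\bigr),
\end{equation*}
which is genuinely $\Theta(k\nu_0)$---the factor $k$ never cancels. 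Your closing chain ``$k\nu_0(1-e^{-1+\epsilon})\leq k\nu_0(1-e^{-1}-\epsilon)\cdot\frac{1}{1-e^{-1}-\epsilon}$, giving total noise $\nu$'' evaluates to $k\nu_0=k\nu/(1-e^{-1}-\epsilon)$, not $\nu$; the ``cancellation with the definition of $\nu_0$'' removes the factor $\frac{1}{1-e^{-1}-\epsilon}$ but leaves $k\nu$. (There is also a direction slip: since $(1-x/k)^k\leq e^{-x}$, the factor $1-(1-\frac{1-\epsilon}{k})^k$ is at \emph{least} $1-e^{-1+\epsilon}$, not at most---but that is second order next to the stray $k$.) As written, your argument establishes only $\E[f(\mathcal{M}_{ALG})]\geq(1-e^{-1}-\epsilon)f(\mathcal{M}^*)-k\nu_0$.

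The paper reaches the stated constant $\nu$ by charging a \emph{single} $\nu_0$ per iteration and keeping it inside the contracting deficit rather than accumulating it additively: after Lemma~\ref{lem:sample_prob} it passes directly to $\E[\Delta(a_\text{out}^{i+1}\mid\mathcal{M}^i)\mid\mathcal{M}^i]\geq\frac{1-\epsilon}{k}\bigl(\Delta(\mathcal{M}^*\mid\mathcal{M}^i)-\nu_0\bigr)\geq\frac{1-\epsilon}{k}\bigl(f(\mathcal{M}^*)-f(\mathcal{M}^i)-\nu_0\bigr)$, i.e., per-step noise $\frac{1-\epsilon}{k}\nu_0$ instead of your $(1-\epsilon)\nu_0$. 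The induction is then run on the shifted deficit $f(\mathcal{M}^*)-\nu_0-\E[f(\mathcal{M}^i)]$, which contracts geometrically and yields $\E[f(\mathcal{M}^k)]\geq\bigl(1-(1-\frac{1-\epsilon}{k})^k\bigr)(f(\mathcal{M}^*)-\nu_0)\geq(1-e^{-1}-\epsilon)(f(\mathcal{M}^*)-\nu_0)=(1-e^{-1}-\epsilon)f(\mathcal{M}^*)-\nu$, using $\nu_0(1-e^{-1}-\epsilon)=\nu$---this is the cancellation the inflated \texttt{ABA} tolerance is designed for, and it only works when $\nu_0$ multiplies the $(1-e^{-1}-\epsilon)$ factor rather than being summed $k$ times. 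Note the subtlety: your recurrence is the honest literal consequence of Lemma~\ref{lem:sample_prob} as stated (the sum contains $|\mathcal{M}^*\setminus\mathcal{M}^i|$ copies of $\nu_0$), whereas the paper's proof bounds $\sum_{a\in\mathcal{M}^*\setminus\mathcal{M}^i}\bigl(\Delta(a\mid\mathcal{M}^i)-\nu_0\bigr)$ below by $\Delta(\mathcal{M}^*\mid\mathcal{M}^i)-\nu_0$, discarding all but one copy; reproducing the theorem requires adopting that single-$\nu_0$ charge (as in Equation~\ref{eq:ex_lb_approx_1} before distributing $\nu_0$ over the sum), not the coarse $|\mathcal{M}^*\setminus\mathcal{M}_i|\leq k$ bound. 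Your high-probability paragraph can stand unchanged, as it coincides with the paper's.
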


\begin{proof}
We use Lemma~\ref{lem:sample_prob} to prove Theorem~\ref{thm:our_algo}. Let $\mathcal{M}^i = \{a_\text{out}^1,\cdots, a_\text{out}^i\}$ be the solution obtained by \ouralgo{} at the end of $i$-th iteration, and $\nu_0 = \frac{\nu}{\revision{k}(1-\epsilon-1/e)}$. 
Then,
\begin{align}
\E[f(\mathcal{M}^{i+1}) - f(\mathcal{M}^i)| \mathcal{M}^{i}] & = \E[\Delta(a_\text{out}^{i+1}|\mathcal{M}^i)|\mathcal{M}^i],\nonumber\\
& \geq \frac{1-\epsilon}{k} \sum_{a \in \mathcal{M}^* \setminus \mathcal{M}} \left(\Delta(a|\mathcal{M}^i) - \nu_0\right) \text{[using Lemma~\ref{lem:sample_prob}]}, \nonumber\\
& \geq \frac{1-\epsilon}{k}  \left(\Delta(\mathcal{M}^*|\mathcal{M}^i) - \revision{k}\nu_0\right)\; [\text{by submodularity of } f(\cdot) ],\nonumber\\
& \geq \frac{1-\epsilon}{k} (f(\mathcal{M}^*) - f(\mathcal{M}^i) - \revision{k}\nu_0).\nonumber
\end{align}
We note, $f(\mathcal{M}^{i})$ is a random variable, due to the randomness introduced by random sampling in the source set $\mathcal{S}$ to select $\mathcal{R}$, and also by the MAB subroutine {ABA}. By taking expectation over  $f(\mathcal{M}^{i})$,
$$\E[f(\mathcal{M}^{i+1}) - f(\mathcal{M}^i)] \geq \frac{1-\epsilon}{k} \E[f(\mathcal{M}^*) - f(\mathcal{M}^i) - \revision{k}\nu_0]. $$
Now, by induction,
\begin{align}\label{eq:approx_final}
    \E[f(\mathcal{M}^k)] & \geq \left(1-\left(\frac{1-\epsilon}{k}\right)^k\right)(f(\mathcal{M}^*)- \revision{k}\nu_0), \nonumber\\
    & \geq \left(1-\left(\exp{(-(1-\epsilon))}\right)\right)(f(\mathcal{M}^*)- \revision{k}\nu_0),\nonumber\\
    & \geq \left(1 - e^{-1} - \epsilon\right)(f(\mathcal{M}^*)- \revision{k}\nu_0) = \left(1 - e^{-1} - \epsilon\right)f(\mathcal{M}^*)-\nu.
\end{align}
The last step to prove Theorem~\ref{thm:our_algo} is proving the correctness of Equation~\ref{eq:approx_final}. We note, Equation~\ref{eq:approx_final} is valid only if Equation~\ref{eq:ex_lb_approx_2} holds for every iteration from $1$ to $k$. Further, the correctness of Equation~\ref{eq:ex_lb_approx_2} is dependent on the Equation~\ref{eq:ex_lb_approx_1}. Now, we note, the arm $a_\text{out}$ returned by {ABA} may not be $(\nu_0, 1)$-optimal with probability at most $\delta/k$, and  Equation~\ref{eq:ex_lb_approx_1} fails to hold. However, the probability at in any of the $k$ iterations, $a_\text{out}$ is not $(\nu_0, 1)$-optimal is at most $\sum_{i=1}^k (\delta/k) = \delta$. Hence, Equation~\ref{eq:approx_final} holds with probability at least $1-\delta$. This completes the proof of  Theorem~\ref{thm:our_algo}.
\end{proof}

We now present the computational complexity result of {\ouralgo}. 
\begin{theorem}[Upper bound on the number of similarity comparisons]\label{thm:disq_coplexity}
Given a problem instance $(\mathcal{S}, \mathcal{T}, \mathbf{q}, d, k)$, $\mathcal{M}^*$ is the optimal solution, and for a tolerance $\epsilon\in (0,  \revision{1-1/e})$, $\nu  >0$, and an acceptable error probability $\delta$ such that $\delta\revision{/k} \in (0, 0.05)$,  the number of similarity comparisons incurred by {\ouralgo}  is in $O\left(\revision{k^3}|\mathcal{S}|\left(\frac{\nu}{1- \epsilon -1/e}\right)^{-2}\log\frac{1}{\epsilon}\log\frac{k}{\delta}\right)$.
\end{theorem}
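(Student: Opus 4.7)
The plan is to multiply three quantities: the number of outer iterations of \ouralgo{}, the number of arm-pulls ABA performs inside each outer iteration, and the worst-case number of similarity comparisons each arm-pull requires. The first two factors essentially fall out of Lemma~\ref{lem:aba_sc} once we substitute the parameters that Algorithm~\ref{alg:our_algo} feeds into ABA; the third factor comes from the cost of evaluating $D_j$ inside Algorithm~\ref{alg:pull}.

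First I would observe that the outer while-loop of Algorithm~\ref{alg:our_algo} terminates after exactly $k$ iterations (one prototype added per pass). In iteration $t$, ABA is invoked on the randomly sampled arm-set $\mathcal{R}$ of size $|\mathcal{R}| = \lceil (|\mathcal{S}|/k)\log(1/\epsilon) \rceil$, with tolerance $\nu_0 := \nu/(1 - 1/e - \epsilon)$ and error threshold $\delta/k$; the normalization assumption on $d$ ensures that each arm's reward (produced by Algorithm~\ref{alg:pull}) lies in $[0,1]$, so Lemma~\ref{lem:aba_sc} applies. Invoking that lemma yields at most $18 |\mathcal{R}|/\nu_0^2 \cdot \ln(k/\delta)$ pulls per iteration. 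Summing over the $k$ iterations and simplifying $k\cdot |\mathcal{R}|/k = O(|\mathcal{S}|\log(1/\epsilon))$ gives a total of $O\!\left(|\mathcal{S}| \nu_0^{-2} \log(1/\epsilon) \log(k/\delta)\right)$ pulls.

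Next I would account for the per-pull cost. In Algorithm~\ref{alg:pull}, each pull samples $j \in \mathcal{T}$ according to $\mathbf{q}$ and returns $\max\{D_j - d(j,i), 0\}$, where $D_j = \min_{i' \in \mathcal{M}} d(j,i')$ by~(\ref{eq:mindist_medoid}). In the worst case, computing this minimum requires $|\mathcal{M}| \le k$ similarity comparisons, plus one additional comparison to evaluate $d(j,i)$, for a total of $O(k)$ similarity comparisons per pull. Multiplying through gives the claimed bound
\[
O\!\left(k|\mathcal{S}|\left(\tfrac{\nu}{1 - \epsilon - 1/e}\right)^{-2}\log\tfrac{1}{\epsilon}\log\tfrac{k}{\delta}\right).
\]

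The main obstacle, and the only step that requires any care, is bookkeeping the per-pull cost: a naive application of Lemma~\ref{lem:aba_sc} alone would only deliver $O(|\mathcal{S}|)$ pulls in total, and one might therefore under-count similarity comparisons by a factor of $k$. Once it is made explicit that evaluating $D_j$ inside the Pull routine requires up to $|\mathcal{M}|$ distance computations (since $\mathcal{M}$ grows to size $k$), the extra factor of $k$ in the bound is immediate and everything else is routine substitution.
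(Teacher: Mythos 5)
Your proposal is correct and follows essentially the same route as the paper: both multiply the per-iteration ABA sample bound from Lemma~\ref{lem:aba_sc} (with tolerance $\nu_0=\nu/(1-e^{-1}-\epsilon)$ and error threshold $\delta/k$, hence the $\log\frac{k}{\delta}$ factor) by the cost of evaluating $D_j$ inside the Pull routine, which is exactly the source of the extra factor of $k$ you correctly flag. The only cosmetic difference is bookkeeping: the paper charges $(i-1)$ comparisons per pull at iteration $i$ and sums $\sum_{i=1}^{k}(i-1)<k^2/2$ to get the constant $9$, whereas you bound the per-pull cost uniformly by $|\mathcal{M}|\leq k$, which changes only the constant, not the asymptotic bound.
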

\begin{proof}
As Lemma~\ref{lem:aba_sc} presents, treating $\mathcal{R}$ as the set of arms, and letting $\nu_0 = \frac{\nu}{\revision{k}(1-e^{-1}-\epsilon)}$, {ABA} incurs at most $\frac{18|\mathcal{R}|}{\nu_0^2}\log\frac{1}{\delta}$ samples to identify an $(\nu_0, 1)$-optimal arm from $\mathcal{R}$. 
Further, at  iteration $i$, there are $(i-1)$ elements in the set $\mathcal{M}$. Hence, for each sampled element from $\mathcal{T}$, it takes $O(|\mathcal{M}|)$ or $i-1$ similarity comparison to compute $D_j$ (given by Equation~\ref{eq:mindist_medoid}). 
Since we pass $\frac{\delta}{k}$ as the allowed error probability for each call to \textsc{ABA}, the total number of similarity comparisons is given by 
\begin{align}
    & \sum_{i=1}^k (i-1) |\mathcal{R}| \times 18 \nu_0^{-2}\log\frac{k}{\delta} =  \sum_{i=1}^k (i-1) \frac{|\mathcal{S}|}{k}\log\frac{1}{\epsilon} \times 18 \nu_0^{-2}\log\frac{k}{\delta}\;\ \left[\because |\mathcal{R}| = \frac{|\mathcal{S}|}{k}\log\frac{1}{\epsilon}\right] \nonumber\\
    & < 9\revision{k^3}|\mathcal{S}|\left(\frac{\nu}{1- \epsilon -1/e}\right)^{-2}\log\frac{1}{\epsilon}\log\frac{k}{\delta}.\nonumber
\end{align}
\end{proof}

It should be noted that the above approximation guarantee and computational complexity analysis of {\ouralgo} also holds when $\T=\S$, i.e., the $k$-medoids clustering setting. Hence, {\ouralgo} scales linearly in $\revision{k^3}|\S|$ for the $k$-medoids clustering problem. 

\subsection{Practical Considerations for Implementing {{\ouralgo}}}\label{subsec:impl_ouralgo}

It is important to note that the number of samples incurred by {ABA}~\citep{bib:Hassidim+KS:2020}
is problem-independent, that is given any fixed-sized set of arms $\mathcal{A}$, for a fixed $\epsilon, \delta \in (0, 0.05)$ the number of samples incurred will be the same (as given by Lemma~\ref{lem:aba_sc}). Hence, it is a non-adaptive algorithm as the number of incurred samples does not depend on the means of the arms. 
However, in practice, it is common to handle set of arms where not all arms have their mean very close to each other.
Hence, it is more efficient to use adaptive algorithms like {KL-LUCB}~\citep{bib:Kaufmann+K:2013} that optimizes the number of samples by taking advantage of the difference between mean of the arms.
At each iteration, {KL-LUCB} judicially selects two arms to sample and the algorithm stops if the confidence-intervals of the arms {crosses a threshold that is adaptively determined by the algorithm}. We note that despite {KL-LUCB} being more sample-efficient than non-adaptive algorithms, complying with it still requires a number of samples that might be too high for a medium-sized datasets. However, in practice, seldom we encounter such pathological cases, and hence, we can use a heuristic to make the algorithm incur lesser number of similarity comparisons by stopping early. We make use of the early-stop heuristic inside {KL-LUCB} for our experiments. 
The early stopping makes use of an optimistic threshold that is easier to meet than the stopping criterion set by {KL-LUCB}.

\section{Experiments}
\label{sec:experiment}

In this section, we show the benefit of {\ouralgo} over SPOTgreedy \citep{bib:Gurumoorthy+JM:2021} in obtaining a good trade-off between the number of distance queries needed and the generalization performance of the obtained prototypes. 

We consider the MNIST~\citep{lecun98a} dataset, which is a collection of 60K images of hand-written digits from 0 to 9, each of size $28 \times 28$ pixels. There are two subsets  mnist\textunderscore train, and mnist\textunderscore test consisting of 50K and 10K images, respectively. Following \cite{bib:Gurumoorthy+JM:2021}, we sample 5K points uniformly at random from mnist\textunderscore test and create the source set $\mathcal{S}$. We create the target set $\mathcal{T}$ from mnist\textunderscore test as follows. First we note, the population of label $5$ is 5421, and it is the least among all the labels. 
Therefore, to create a target set with skew $\theta\%$ we take all the elements of label 5 from mnist\textunderscore train and we uniformly sample its remaining elements leading to a $\mathcal{T}$ of size $(542100/\theta)$. Thus, for $\theta=10$, the target set $\mathcal{T}$ is completely balanced, while for $\theta=100$ it consists elements only from label 5. We conduct different experiments by varying $\theta \in \{10, 20, 50, 70, 100\}$, and $k \in \{100, 200, 500\}$.
For \ouralgo{}, we have used $\epsilon \in \{0.2, 0.4\}$, and $\revision{\nu_0} \in \{0.05, 0.09\}$ (\revision{at each iteration of {\ouralgo}}). The results are averaged over ten randomized runs. {We use the Euclidean distance as the pair-wise dissimilarity measure $d$ between points and normalize the distances to be in $[0,1]$.}

In our experiments, we consider the setting in which both SPOTgreedy and \ouralgo{} select only one element at each iteration, i.e., $r = 1$. We also ensure that both {\ouralgo} and SPOTgreedy compute the similarity comparisons on the fly (i.e., they do not memorize the computed similarity comparisons between data points). 

The results are shown in Figure~\ref{fig:comp_distq_obj_acc_sk_10_eps02}. 
For $\mathcal{T}$ with 10\% skew (i.e., $\theta=10$), Figure~\ref{fig:comp_distq_obj_acc_sk_10_eps02}(a) shows that \ouralgo{} (for $\epsilon=0.2$) incurs as little as $1/100$-th of the number of similarity comparisons made by SPOTgreedy; however as Figure~\ref{fig:comp_distq_obj_acc_sk_10_eps02}(a) shows the drop in objective value is within only~2\%.

We note that the number of distance computations for SPOTgreedy is minimum if it is allowed to memorize all the $|\mathcal{S} \times \mathcal{T}|$ pair-wise similarity values. Let us call this implementation of SPOTgreedy where it is allowed to memorize all these values as SPOT\textunderscore M. We also compare {\ouralgo} with SPOT\textunderscore M. Figure~\ref{fig:comp_distq_obj_acc_sk_10_eps02}(a) shows the number of incurred distance queries by the proposed  \ouralgo{} is far less as compared to SPOT\textunderscore M.

In practice, it is very common that value of the objective is not the final thing that an experimenter seeks.
A more sensible way to compare the results is via the accuracy of the selected prototypes.
Thanks to Lemma~\ref{lem:spot_build_pam}, we can now inherit the barycentric projection method from the theory of optimal transport. Following~\citet{bib:Gurumoorthy+JM:2021}, the accuracy of the selected prototypes is measured by using the barycentric mapping for both SPOTgreedy and \ouralgo{}. As depicted by Figure~\ref{fig:comp_distq_obj_acc_sk_10_eps02}(c), \ouralgo{} achieves a very close accuracy compared to SPOTgreedy. Detailed comparisons between these two algorithms for different values of skew ($\theta$), $\epsilon$, and $\nu$ are in \conditional{Appendix \ref{app:experiment}}{Appendix C of the extended version of this paper \citep{chaudhuri22a}}.


In \conditional{Appendix~\ref{app:experiment_cifar}}{Appendix~D of the extended version of this paper \citep{chaudhuri22a}},
we present experiments on an additional dataset.

\begin{figure}[t]
    \centering
    \subfigure{\includegraphics[height=3cm,width=0.32\linewidth]{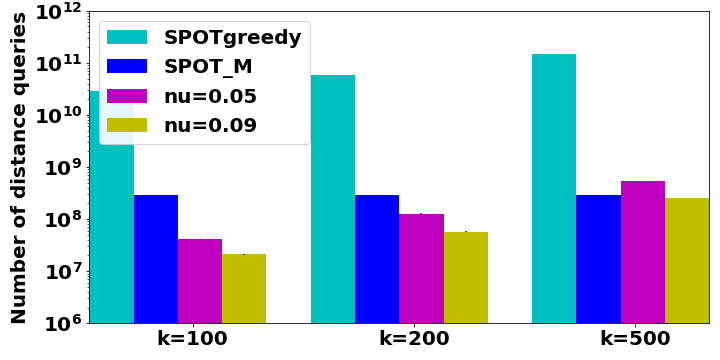}\label{subfig:distq_comp_sk_10_eps02}}
    \subfigure{\includegraphics[height=3cm,width=0.32\linewidth]{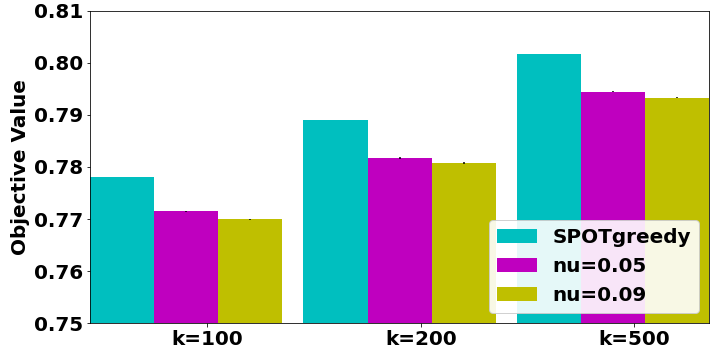}\label{subfig:obj_comp_sk_10_eps02}}
    \subfigure{\includegraphics[height=3cm,width=0.32\linewidth]{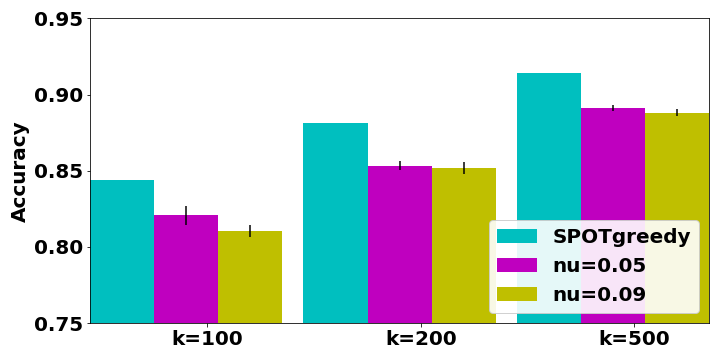}\label{subfig:acc_comp_sk_10_eps02}}
    \caption{Comparison of SPOTgreedy and \ouralgo{} (with $\epsilon=0.2$) on $\mathcal{T}$ with skew = 10\% in terms of (a) number of similarity comparisons; (b) objective value; and (c) generalization performance. SPOT\textunderscore M in Figure~\ref{fig:comp_distq_obj_acc_sk_10_eps02}(a) is an implementation of SPOTgreedy with all the $|\mathcal{S} \times \mathcal{T}|$ pairwise-similarity values memorized. Our proposed {\ouralgo} with different $\nu$ values obtain good accuracy at much lower number of similarity comparisons for SPOTgreedy. Compared to SPOT\textunderscore M, which memorizes all the $|\mathcal{S} \times \mathcal{T}|$ pair-wise similarity values, we see the benefit of {\ouralgo} (that never memorizes the similarity values). \revision{Here, $\nu$ parameter corresponds to per iteration $\nu_0$ of {\ouralgo}.}}\label{fig:comp_distq_obj_acc_sk_10_eps02}
\end{figure}

\section{Conclusion and Future Work}\label{sec:future_work}
We have proposed a novel unsupervised algorithm {\ouralgo} for the prototype selection problem, which offers a good trade-off between the number of similarity comparisons needed and the quality of prototypes obtained. The key idea is to generalize the popular clustering method PAM~\citep{kmediods} for the prototype selection problem. We introduce two strategies for solving the \texttt{BUILD} step of generalized PAM: first, use of random subset selection (to reduce the search space), and second, using multi-armed bandits based identification of an approximately best candidate. 
We also provide an approximation guarantee of the prototypical set obtained by the proposed {\ouralgo} algorithm. This allows the proposed algorithm {\ouralgo} to have an upper bound on the number of similarity comparisons that is independent of target set size $|\T|$ and that scales linearly in source set size $|\S|$. For the case of $\T = \S$, a salient observation is that {\ouralgo} approximates the \texttt{BUILD} solution of PAM for the $k$-medoids clustering problem in $O(\revision{k^3}|\S|)$ complexity. 


Currently, our analysis relies critically on the application of \citep[Theorem~1]{bib:Hassidim+KS:2020} for deriving the bounds. It would be interesting to see whether we can tighten the analysis for better bounds. Another research direction would be to analyze the \texttt{SWAP} step of generalized PAM within our framework in order to refine the prototypes obtained. Equally intriguing would be to explore distributed MAB approaches~\citep{bib:Mahadik+WLS:2020,li2016collaborative} for the prototype selection problem.

\section*{Acknowledgment}
We would like to thank Karthik Gurumoorthy for his insightful comments, discussion, and suggestions, especially on highlighting our bounds for the $k$-medoids problem. We would also like to thank the reviewers for their feedback.

\bibliography{references}
\clearpage
\appendix
\section{Generalized PAM Algorithm for Prototype Selection}
\label{app:pam}
\begin{itemize}
    \item As defined in Equation~\ref{eq:mindist_medoid}, let $D_j$ be the dissimilarity between a point $j$ and the closest object in $\mathcal{M}$, i.e. 
    \begin{equation}
        D_j = \min_{i \in \mathcal{M}} \{d(j, i)\}.
    \end{equation}
    
    \item Let, $E_j$ be the be the dissimilarity between a point $j$ and the \emph{second} closest object in $\mathcal{S}$, i.e., $E_j = \min_{s \in \mathcal{S}\setminus \{\argmin_{c \in \mathcal{M}} \{d(j, c)\}\} } \{d(j, s)\}$.
\end{itemize}

\begin{algorithm}[ht]
\caption{Generalized PAM algorithm for prototype selection}\label{alg:genpam_full}
\DontPrintSemicolon
\SetKwInOut{Input}{Input}
\SetKwInOut{Output}{Output}
\Input{Problem instance given by $(\mathcal{S}, \mathcal{T},\mathbf{q}, d, k)$, and a pair of positive integers $r, l \in \{1, \cdots, k\}$.}
\Output{A set $\mathcal{M}$, such that $\mathcal{M} = k$}
\textbf{Initialization:} For each $j \in \mathcal{T}$, set $D_j = \infty$.\;

\Begin(\textbf{\texttt{BUILD} Step:}){
 In this step we apply greedy strategy to choose an initial set of  $k$ points, in other words, an initial set of medoids.\; 
$\mathcal{M} = \emptyset$\;
\While{$|\mathcal{M}| < k$}{
    Define vector $g$ with entries
    \begin{equation}\label{eq:gain_genpam}
        g_i = \sum_{j \in \mathcal{T}} q_j \max \{D_j - d(j, i), 0\},\;  \forall i \in \mathcal{S} \setminus \mathcal{M}.
    \end{equation}
    \uIf(\tcp*[h]{no meaningful medoid to add}){$\forall i \in \mathcal{S}, g_i = 0$}{
    $Q \defeq$ Choose $r$ elements from  $\mathcal{S}\setminus\mathcal{M}$ at random \; 
    }\Else{
    $Q \defeq$ Set of indices of top $r$ largest \emph{non-zero} elements in $g = \{g_i\}_{i=1}^{|\{\mathcal{S} \setminus \mathcal{M}\}|}$ \;
    }
    $\mathcal{M} = \mathcal{M} \cup Q$\;
    Update $D_j$, and $E_j$ for all $j \in \mathcal{T}$.\;
}
}
\Begin(\textbf{{\texttt{SWAP}} Step: or refinement step}){
\For{$(i, h) \in \mathcal{S} \times \{\mathcal{S} \setminus \mathcal{M}\}$}{
\begin{equation*}
    \text{compute}\; K_{jih} = 
    \begin{cases}
        \min \{d(j, h) - D_j, 0)\}\; \forall j \in \mathcal{T}, \text{where}\; d(j,i) > D_j\\
        \min \{d(j, h) - E_j, 0)\}\; \forall j \in \mathcal{T}, \text{where}\; d(j,i) = D_j
    \end{cases}
\end{equation*}
$T_{ih} = \sum_{j \in \mathcal{T}} q_j K_{jih}$
}
For $l$ different $h \in \{\mathcal{S} \setminus \mathcal{M}\}$, select pairs $(i, h) \in \mathcal{S} \times \{\mathcal{S} \setminus \mathcal{M}\}$ that minimize $T_{ih}$.
}
\uIf{$T_{ih} < 0$} {
Swap $i$ with $h$.
}\Else(\tcp*[h]{value of the objective cannot
be decreased}){
\Return $\mathcal{M}$
}
\end{algorithm}

\section{Proof of Lemma \ref{lem:spot_build_pam}}
\label{subsec:spot_restate} \label{app:spot_build_pam}

 SPOTgreedy at its core chooses medoids by solving the following optimisation problem given by Equation~\ref{eq:spot_eq10}.
Assuming $f(\emptyset) = 0$, defined in Equation \ref{eq:spot_eq10}, and noting that due to $d: \mathcal{T} \times \mathcal{S} \mapsto [0, 1]$, $\beta_i$ (in Algorithm~\ref{alg:spot}) lies within $[0, 1]$
we restate SPOTgreedy \citep{bib:Gurumoorthy+JM:2021} in Algorithm~\ref{alg:spot}.

\begin{algorithm}[ht]
\DontPrintSemicolon
\SetKwInOut{Input}{Input}
\SetKwInOut{Output}{Output}
\caption{SPOTgreedy~\citep{bib:Gurumoorthy+JM:2021}}\label{alg:spot}
\Input{Problem instance given by $(\mathcal{S}, \mathcal{T}, \mathbf{q}, d, k)$, and a positive integer $r \leq k$.}
\Output{A set $\mathcal{M}$, such that $|\mathcal{M}| = k$}
\While{$|\mathcal{M}| < k$}{
Define gain vector $\beta$ with entries 
\begin{equation}\label{eq:gain_spot}
 \beta_i = f(\mathcal{M} \cup \{i\}) - f(\mathcal{M}), \forall i \in \mathcal{S} \setminus \mathcal{M},
\end{equation}
where $f:\mathcal{S} \mapsto [0, +\infty)$ is defined in Equation~\ref{eq:spot_eq10}.\;

\uIf(\tcp*[h]{no potential prototype to add}){$\forall i \in \mathcal{S}, \beta_i = 0$}{
        $Q \defeq$ Choose $r$ elements from  $\mathcal{S}\setminus\mathcal{M}$ at random \; \;
    }\Else{
    $Q \defeq$ Set of indices of top $r$ largest \emph{non-zero} elements in $\beta = \{\beta_i\}_{i=1}^{|\mathcal{S} \setminus \mathcal{M}|}$\;
    }
$\mathcal{M} = \mathcal{M} \cup Q$
}
\end{algorithm}


\begin{proof}
Let, at the beginning of some iteration $t$, the set of points already chosen by SPOTgreedy (Algorithm~\ref{alg:spot}) and Algorithm~\ref{alg:genpam_build} be identical, and let it be $\mathcal{M}_t$. We note, such $t$ exists; for example, at beginning of the very first iteration ($t=1$), $\mathcal{M}_t = \{\emptyset\}$. Now, to prove the lemma, it will suffice to prove that, $g_i$ on the LHS of Equation~\ref{eq:gain_genpam_build} is identical to $\beta_i$ on the LHS of Equation~\ref{eq:gain_spot}.

We note, for all $i \in \mathcal{S} \setminus \mathcal{M}$, we have
\begin{align*}
    \beta_i & =  f(\mathcal{M}_t \cup \{i\}) - f(\mathcal{M}_t)  \\
            &  =  \sum\limits_{j \in \mathcal{T}} q_j \max\limits_{i \in \mathcal{M}_t \cup \{i\}} Z_{ji} -  \sum\limits_{j \in \mathcal{T}} q_j \max\limits_{i \in \mathcal{M}_t} Z_{ji} && [\text{using Equation~\ref{eq:spot_eq10}}],\\
            & = \sum\limits_{j \in \mathcal{T}} q_j \max\limits_{i \in \mathcal{M}_t \cup  \{i\}} (C - d(j, i)) -  \sum\limits_{j \in \mathcal{T}} q_j \max\limits_{i \in \mathcal{M}_t} (C - d(j, i))  && [\text{using Equation~\ref{eq:spot_sim}}],\nonumber\\
            & = \sum\limits_{j \in \mathcal{T}} q_j \left(\max\limits_{i \in \mathcal{M}_t \cup  \{i\}} (C - d(j, i)) -  \max\limits_{i \in \mathcal{M}_t} (C - d(j, i))\right), && \text{[changing difference of sums}\nonumber\\[-15pt]
            & && \text{ to sum of differences],}\nonumber\\
            & = \sum\limits_{j \in \mathcal{T}} q_j \left(C + \max\limits_{i \in \mathcal{M}_t \cup  \{i\}}(- d(j, i)) -  C - \max\limits_{i \in \mathcal{M}_t} (- d(j, i))\right)  && [\because \text{$C$ is a constant}],\nonumber\\
            & = \sum\limits_{j \in \mathcal{T}} q_j \left(\max\limits_{i \in \mathcal{M}_t \cup  \{i\}}(- d(j, i)) -  \max\limits_{i \in \mathcal{M}_t} (- d(j, i))\right)  && [\because \text{$C$ is a constant}],\nonumber\\
            & = \sum\limits_{j \in \mathcal{T}} q_j \left(-\min\limits_{i \in \mathcal{M}_t \cup  \{i\}}(d(j, i)) + \min\limits_{i \in \mathcal{M}_t} (d(j, i))\right)  && \text{[exchanging order of }\nonumber\\[-15pt]
            & && \text{$\max$  and `$-$' sign],}\nonumber\\
            & = \sum\limits_{j \in \mathcal{T}} q_j \left(\min\limits_{i \in \mathcal{M}_t} d(j, i) - \min\limits_{i \in \mathcal{M}_t \cup  \{i\}} d(j, i)\right)  && [\text{rearranging the terms}], \nonumber\\
            & = \sum\limits_{j \in \mathcal{T}} q_j \left(D_j - \min\limits_{i \in \mathcal{M}_t \cup  \{i\}} d(j, i)\right)  && \text{[using $\mathcal{M} = \mathcal{M}_t$ at} \nonumber\\[-15pt]
            & && \text{Equation~\ref{eq:mindist_medoid}],}\nonumber\\
            & = \begin{dcases}
                    \sum\limits_{j \in \mathcal{T}} q_j \left(D_j - d(j, i)\right) & \text{if}\; D_j > d(j, i), \nonumber\\
                    0 & \text{otherwise,}
                \end{dcases} \nonumber\\
            & = \sum\limits_{j \in \mathcal{T}} q_j \max\left(D_j - d(j, i), 0\right), \nonumber\\
            & = g_i  && [\text{using Equation~\ref{eq:gain_genpam_build} in }]\nonumber\\
            & && \text{Algorithm~\ref{alg:genpam_build}].}          
\end{align*}
Therefore, for any given problem instance $(\mathcal{S}, \mathcal{T}, \mathbf{q}, d, k)$, for the same value of the input parameter $r$, the output of SPOTgreedy is identical to the output of  Algorithm~\ref{alg:genpam_build}.
\end{proof}

\section{Additional Experiments on MNIST Dataset}\label{app:experiment} 
\subsection{Comparison of performance of {\ouralgo} (with $\epsilon=0.2$) against SPOTgreedy}\label{app:subsec:eps02}
\begin{figure}[H]
    \centering
    \subfigure{\includegraphics[height=3cm,width=0.32\linewidth]{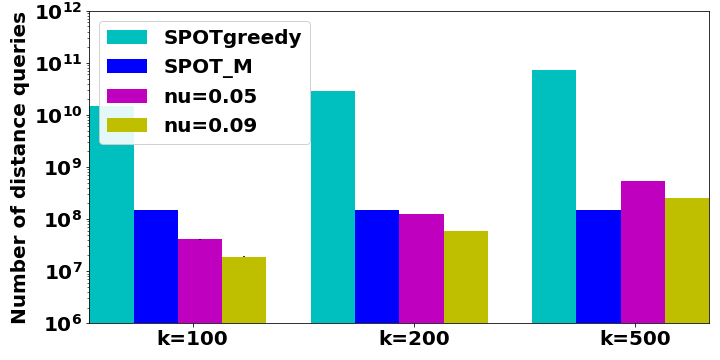}\label{subfig:distq_comp_sk_20_eps02}}
    \subfigure{\includegraphics[height=3cm,width=0.32\linewidth]{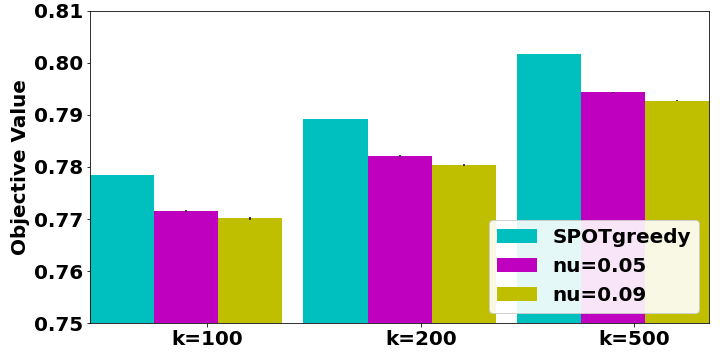}\label{subfig:obj_comp_sk_20_eps02}}
    \subfigure{\includegraphics[height=3cm,width=0.32\linewidth]{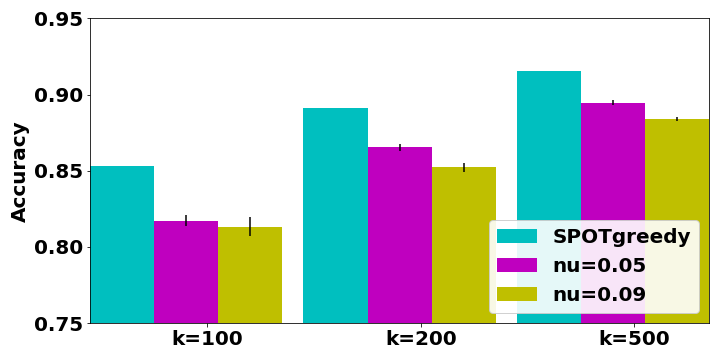}\label{subfig:acc_comp_sk_20_eps02}}
    \caption{Comparison of number of similarity comparisons, objective value, and achieved accuracy by SPOTgreedy and {\ouralgo} (with $\epsilon=0.2$) on Target with skew = 20\%. SPOT\textunderscore M is an implementation of SPOTgreedy with all the $|\mathcal{S} \times \mathcal{T}|$ pairwise-similarity values memorised. For details, see Section~\ref{sec:experiment}.} \label{fig:comp_distq_obj_acc_sk_20_eps02}
\end{figure}

\begin{figure}[H]
\label{fig:comp_distq_obj_acc_sk_50_eps02}
    \centering
    \subfigure{\includegraphics[height=3cm,width=0.32\linewidth]{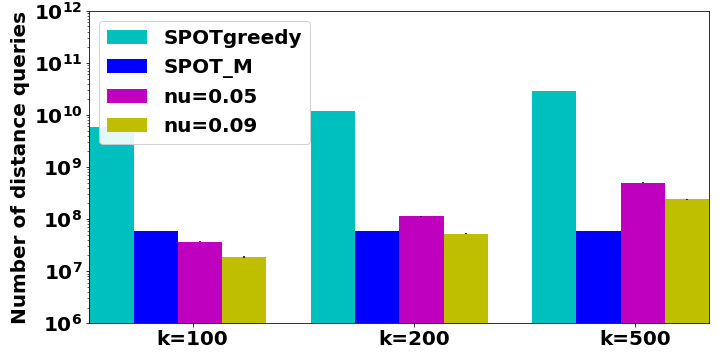}\label{subfig:distq_comp_sk_50_eps02}}
    \subfigure{\includegraphics[height=3cm,width=0.32\linewidth]{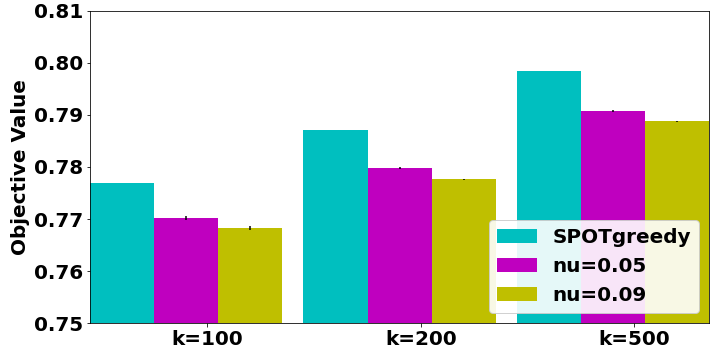}\label{subfig:obj_comp_sk_50_eps02}}
    \subfigure{\includegraphics[height=3cm,width=0.32\linewidth]{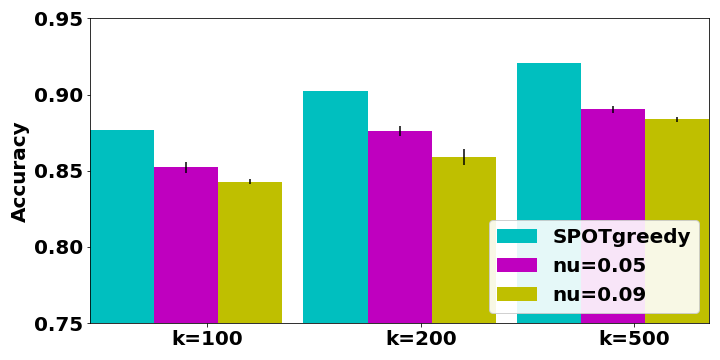}\label{subfig:acc_comp_sk_50_eps02}}
    \caption{Comparison of number of similarity comparisons, objective value, and achieved accuracy by SPOTgreedy and {\ouralgo} (with $\epsilon=0.2$) on Target with skew = 50\%. SPOT\textunderscore M is an implementation of SPOTgreedy with all the $|\mathcal{S} \times \mathcal{T}|$ pairwise-similarity values memorised. For details, see Section~\ref{sec:experiment}.}
\end{figure}

\begin{figure}[H]
\label{fig:comp_distq_obj_acc_sk_70_eps02}
    \centering
    \subfigure{\includegraphics[height=3cm,width=0.32\linewidth]{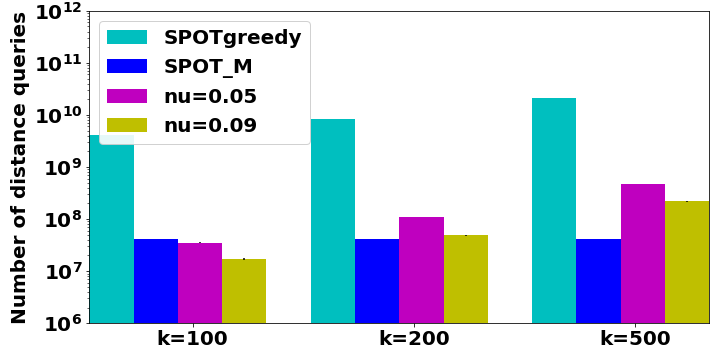}\label{subfig:distq_comp_sk_70_eps02}}
    \subfigure{\includegraphics[height=3cm,width=0.32\linewidth]{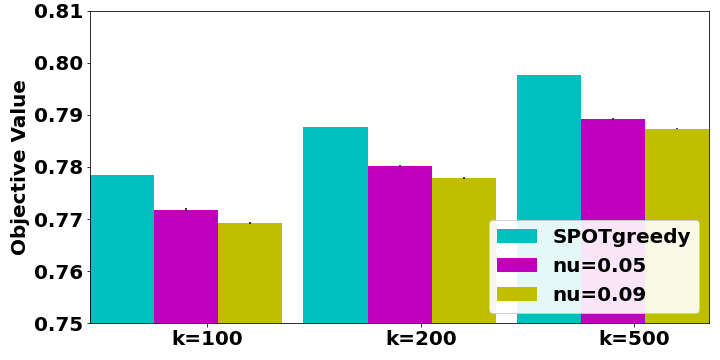}\label{subfig:obj_comp_sk_70_eps02}}
    \subfigure{\includegraphics[height=3cm,width=0.32\linewidth]{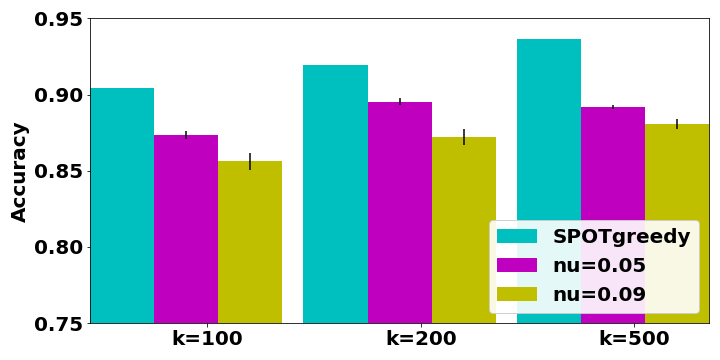}\label{subfig:acc_comp_sk_70_eps02}}
    \caption{Comparison of number of similarity comparisons, objective value, and achieved accuracy by SPOTgreedy and {\ouralgo} (with $\epsilon=0.2$) on Target with skew = 70\%. SPOT\textunderscore M is an implementation of SPOTgreedy with all the $|\mathcal{S} \times \mathcal{T}|$ pairwise-similarity values memorised. For details, see Section~\ref{sec:experiment}.}
\end{figure}

\begin{figure}[H]
\label{fig:comp_distq_obj_acc_sk_100_eps02}
    \centering
    \subfigure{\includegraphics[height=3cm,width=0.32\linewidth]{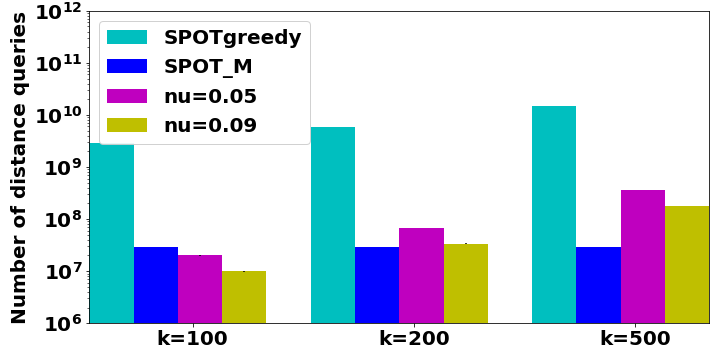}\label{subfig:distq_comp_sk_100_eps02}}
    \subfigure{\includegraphics[height=3cm,width=0.32\linewidth]{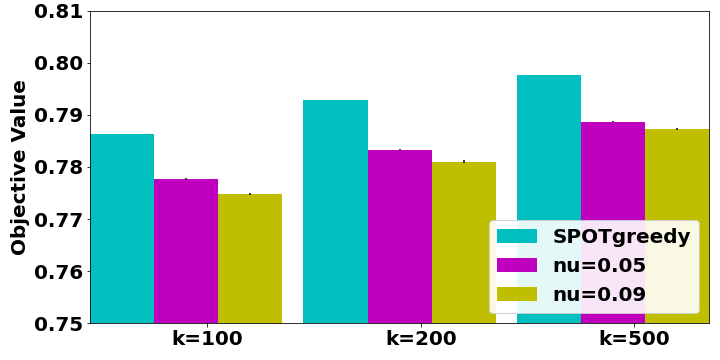}\label{subfig:obj_comp_sk_100_eps02}}
    \subfigure{\includegraphics[height=3cm,width=0.32\linewidth]{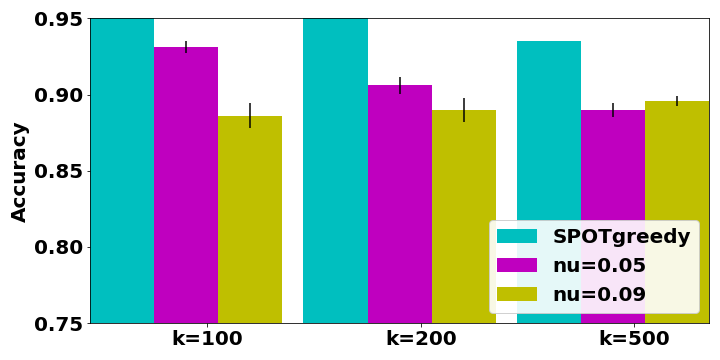}\label{subfig:acc_comp_sk_100_eps02}}
    \caption{Comparison of number of similarity comparisons, objective value, and achieved accuracy by SPOTgreedy and {\ouralgo} (with $\epsilon=0.2$) on Target with skew = 100\%. SPOT\textunderscore M is an implementation of SPOTgreedy with all the $|\mathcal{S} \times \mathcal{T}|$ pairwise-similarity values memorised. For details, see Section~\ref{sec:experiment}.}
\end{figure}

\subsection{Comparison of performance of {\ouralgo{}} (with $\epsilon=0.4$) against SPOTgreedy and SPOT\textunderscore M}\label{app:subsec:eps04}

\begin{figure}[H]
\label{fig:comp_distq_obj_acc_sk_10_eps04}
    \centering
    \subfigure{\includegraphics[height=3cm,width=0.32\linewidth]{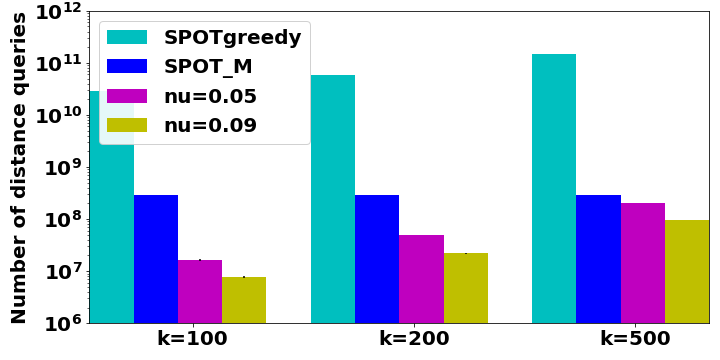}\label{subfig:distq_comp_sk_10_eps04}}
    \subfigure{\includegraphics[height=3cm,width=0.32\linewidth]{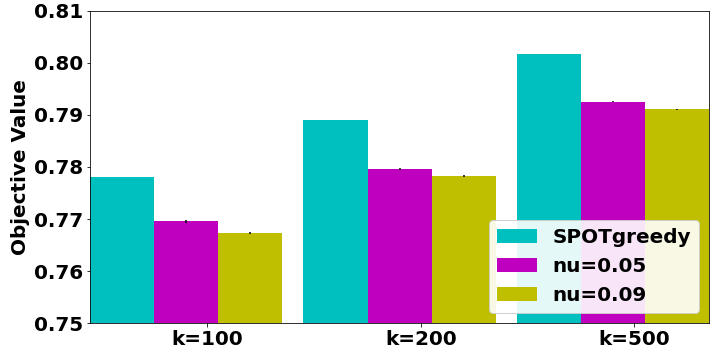}\label{subfig:obj_comp_sk_10_eps04}}
    \subfigure{\includegraphics[height=3cm,width=0.32\linewidth]{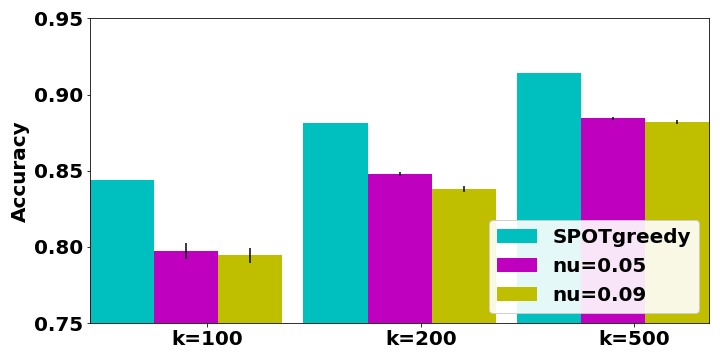}\label{subfig:acc_comp_sk_10_eps04}}
    \caption{Comparison of number of similarity comparisons, objective value, and achieved accuracy by SPOTgreedy and {\ouralgo} (with $\epsilon=0.4$) on Target with skew = 10\%. SPOT\textunderscore M is an implementation of SPOTgreedy with all the $|\mathcal{S} \times \mathcal{T}|$ pairwise-similarity values memorised. For details, see Section~\ref{sec:experiment}.}
\end{figure}

\begin{figure}[H]
\label{fig:comp_distq_obj_acc_sk_20_eps04}
    \centering
    \subfigure{\includegraphics[height=3cm,width=0.32\linewidth]{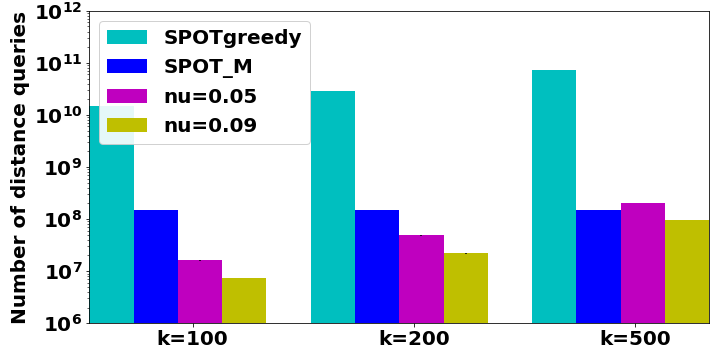}\label{subfig:distq_comp_sk_20_eps04}}
    \subfigure{\includegraphics[height=3cm,width=0.32\linewidth]{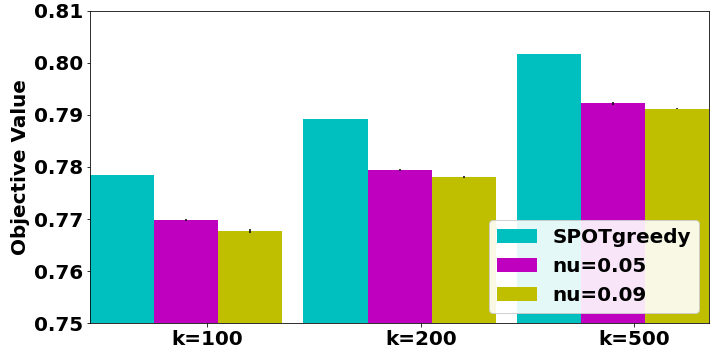}\label{subfig:obj_comp_sk_20_eps04}}
    \subfigure{\includegraphics[height=3cm,width=0.32\linewidth]{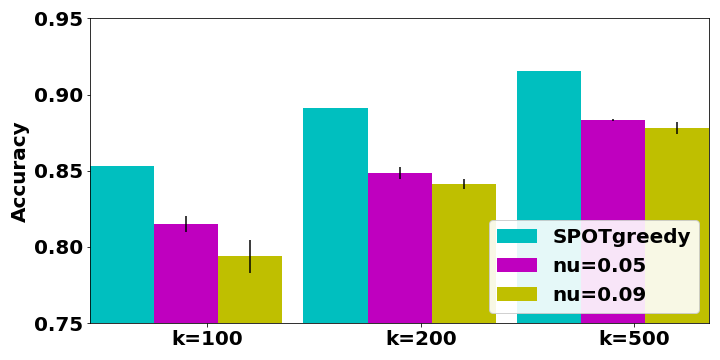}\label{subfig:acc_comp_sk_20_eps04}}
    \caption{Comparison of number of similarity comparisons, objective value, and achieved accuracy by SPOTgreedy and {\ouralgo} (with $\epsilon=0.4$) on Target with skew = 20\%. SPOT\textunderscore M is an implementation of SPOTgreedy with all the $|\mathcal{S} \times \mathcal{T}|$ pairwise-similarity values memorised. For details, see Section~\ref{sec:experiment}.}
\end{figure}

\begin{figure}[H]
\label{fig:comp_distq_obj_acc_sk_50_eps04}
    \centering
    \subfigure{\includegraphics[height=3cm,width=0.32\linewidth]{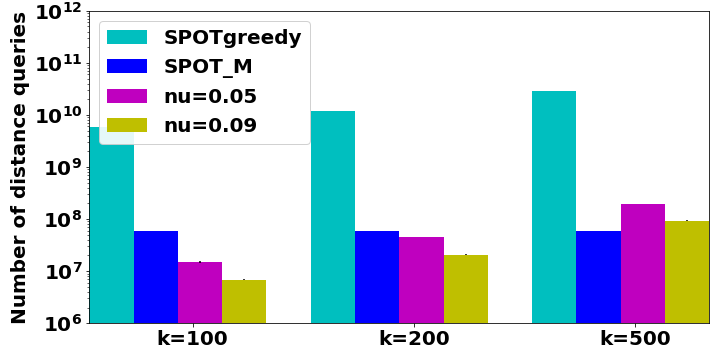}\label{subfig:distq_comp_sk_50_eps04}}
    \subfigure{\includegraphics[height=3cm,width=0.32\linewidth]{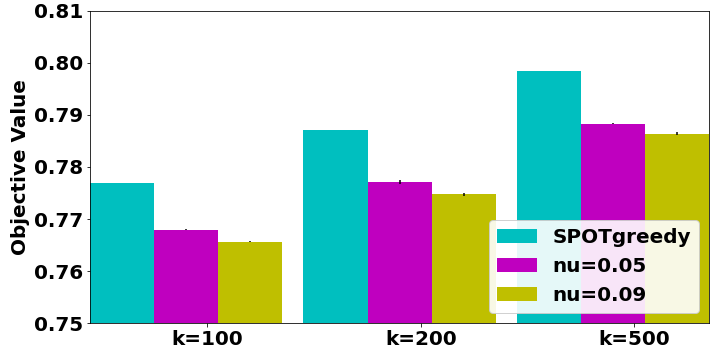}\label{subfig:obj_comp_sk_50_eps04}}
    \subfigure{\includegraphics[height=3cm,width=0.32\linewidth]{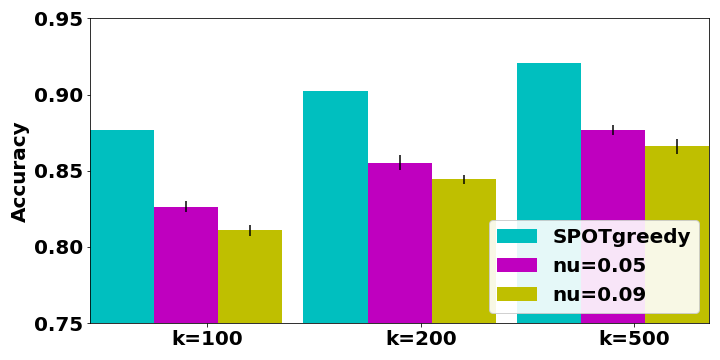}\label{subfig:acc_comp_sk_50_eps04}}
    \caption{Comparison of number of similarity comparisons, objective value, and achieved accuracy by SPOTgreedy and {\ouralgo} (with $\epsilon=0.4$) on Target with skew = 50\%. SPOT\textunderscore M is an implementation of SPOTgreedy with all the $|\mathcal{S} \times \mathcal{T}|$ pairwise-similarity values memorised. For details, see Section~\ref{sec:experiment}.}
\end{figure}

\begin{figure}[H]
\label{fig:comp_distq_obj_acc_sk_70_eps04}
    \centering
    \subfigure{\includegraphics[height=3cm,width=0.32\linewidth]{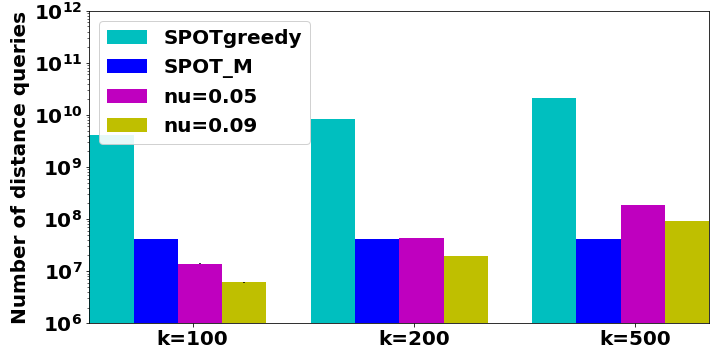}\label{subfig:distq_comp_sk_70_eps04}}
    \subfigure{\includegraphics[height=3cm,width=0.32\linewidth]{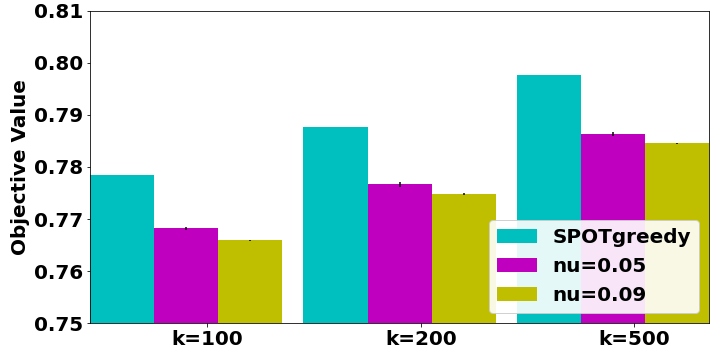}\label{subfig:obj_comp_sk_70_eps04}}
    \subfigure{\includegraphics[height=3cm,width=0.32\linewidth]{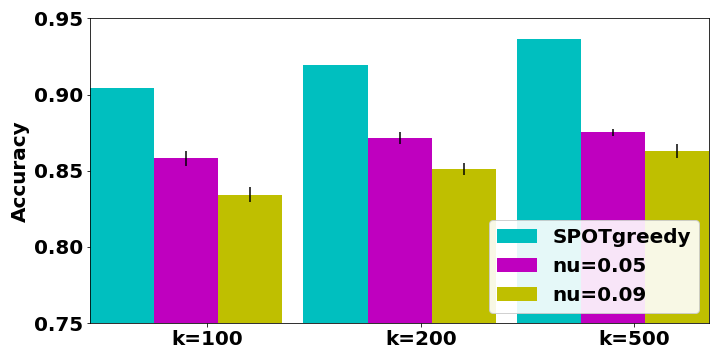}\label{subfig:acc_comp_sk_70_eps04}}
    \caption{Comparison of number of similarity comparisons, objective value, and achieved accuracy by SPOTgreedy and {\ouralgo} (with $\epsilon=0.4$) on Target with skew = 70\%. SPOT\textunderscore M is an implementation of SPOTgreedy with all the $|\mathcal{S} \times \mathcal{T}|$ pairwise-similarity values memorised. For details, see Section~\ref{sec:experiment}.}
\end{figure}

\begin{figure}[H]
\label{fig:comp_distq_obj_acc_sk_100_eps04}
    \centering
    \subfigure{\includegraphics[height=3cm,width=0.32\linewidth]{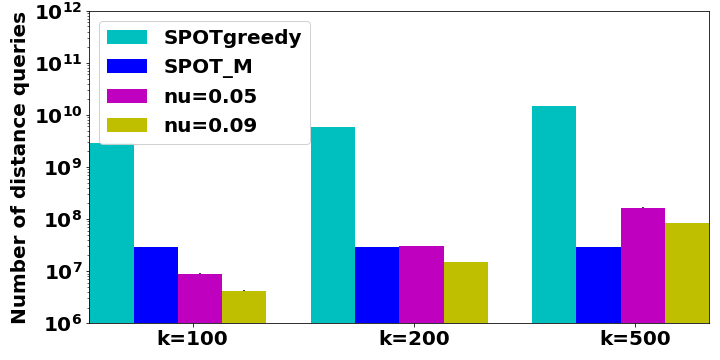}\label{subfig:distq_comp_sk_100_eps04}}
    \subfigure{\includegraphics[height=3cm,width=0.32\linewidth]{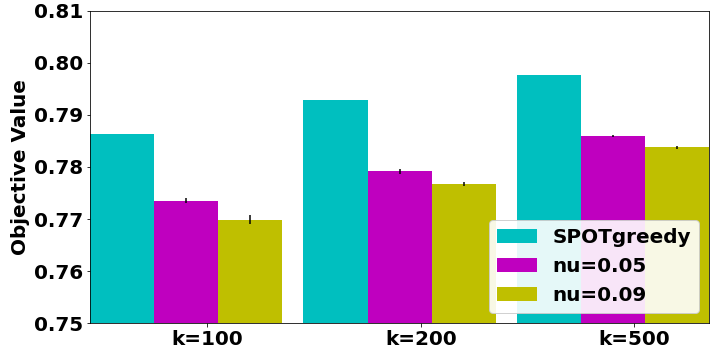}\label{subfig:obj_comp_sk_100_eps04}}
    \subfigure{\includegraphics[height=3cm,width=0.32\linewidth]{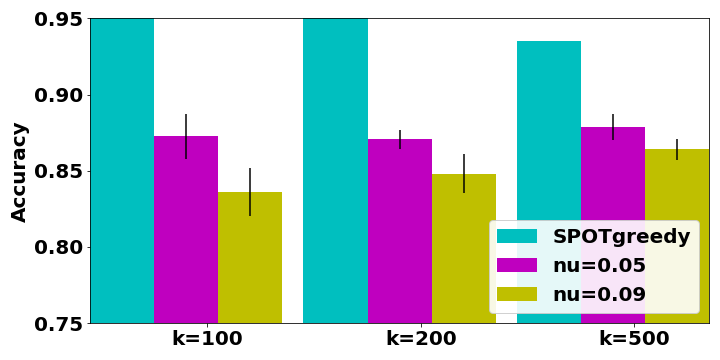}\label{subfig:acc_comp_sk_100_eps04}}
    \caption{Comparison of number of similarity comparisons, objective value, and achieved accuracy by SPOTgreedy and {\ouralgo} (with $\epsilon=0.4$) on Target with skew = 100\%. SPOT\textunderscore M is an implementation of SPOTgreedy with all the $|\mathcal{S} \times \mathcal{T}|$ pairwise-similarity values memorised. For details, see Section~\ref{sec:experiment}.}
\end{figure}

\section{Experiment on CIFAR-10 Dataset}\label{app:experiment_cifar} 

We run experiments on the CIFAR-10 dataset \citep{bib:cifar10}. The dataset consists 60000 color images from $10$classes. Each image is of size $32\times32$ pixels, with $6000$ images per class. The dataset is split into training and test sets consisting of $50\,000$ and $10\,000$ images, respectively. We have assumed the training and the test set as the target and source sets, respectively. For the sake of better representation, we convert these images into gray-scale followed by PCA and choosing the first $161$ dimensions to cover the $95\%$ variance. Just like the MNIST data, here too we use the Euclidean distance (normalized within [0,1]) as the pair-wise dissimilarity measure $d$..

We observe that \ouralgo{} achieves a very close objective value as compared to SPOTgreedy,  but incurs only 1/100-th of the number of similarity queries. 


\begin{figure}[h]
    \centering
    \subfigure{\includegraphics[height=3cm,width=0.32\linewidth]{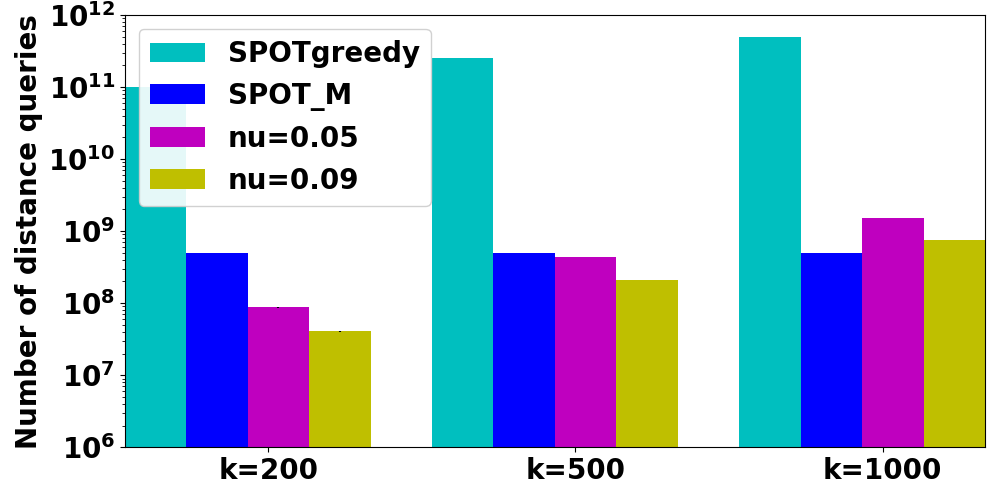}\label{subfig:cifar_distq_comp_eps02}}
    \subfigure{\includegraphics[height=3cm,width=0.32\linewidth]{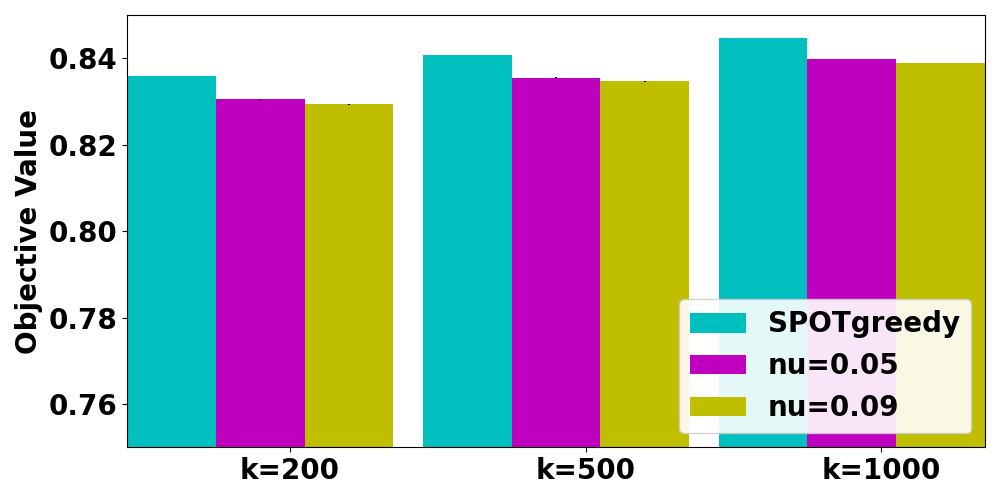}\label{subfig:cifar_obj_comp_sk_10_eps02}}
    \subfigure{\includegraphics[height=3cm,width=0.32\linewidth]{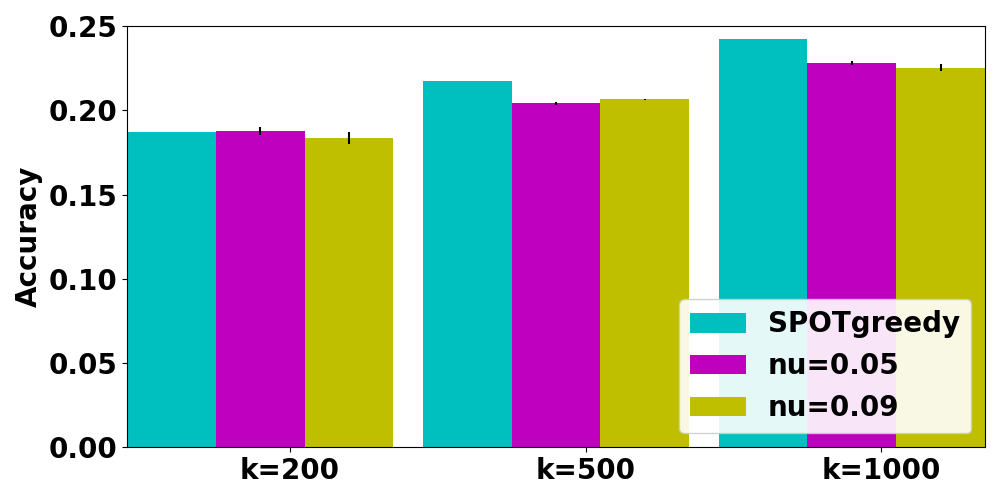}\label{subfig:cifar_acc_comp_sk_10_eps02}}
    \caption{Comparison of SPOTgreedy and \ouralgo{} (with $\epsilon=0.2$) on $\mathcal{T}$ on the CIFAR-10 dataset in terms of (a) number of similarity comparisons; (b) objective value; and (c) generalization performance. SPOT\textunderscore M in Figure~\ref{fig:cifar_comp_distq_obj_acc_eps02}(a) is an implementation of SPOTgreedy with all the $|\mathcal{S} \times \mathcal{T}|$ pairwise-similarity values memorized. Our proposed {\ouralgo} with different $\nu$ values obtain good accuracy at much lower number of similarity comparisons for SPOTgreedy. Compared to SPOT\textunderscore M, which memorizes all the $|\mathcal{S} \times \mathcal{T}|$ pair-wise similarity values, we see the benefit of {\ouralgo} (that never memorizes the similarity values).}\label{fig:cifar_comp_distq_obj_acc_eps02}
\end{figure}

\end{document}

